\documentclass[11pt]{article}

\usepackage[final]{acl}

\newcounter{observation}
\newcommand{\myobservation}[2]{
  \refstepcounter{observation}
  \label{#1}
  \noindent \textbf{Observation \theobservation.} \textit{\textbf{#2}}
}
\usepackage{algorithm}
\usepackage{algpseudocode}
\usepackage{times}
\usepackage{pifont}
\usepackage{latexsym}
\usepackage{subcaption}
\usepackage{booktabs}
\usepackage{tabularx}
\usepackage{adjustbox}
\usepackage{makecell}
\usepackage{amsmath}
\usepackage{amsfonts}
\usepackage{amsthm}

\newtheorem{lemma}{Lemma}

\usepackage{amssymb}
\usepackage{hhline}
\usepackage{multirow}
\usepackage{colortbl}
\usepackage[table]{xcolor}
\definecolor{lightblue}{rgb}{0.9, 0.9, 1.0} 
\definecolor{lightred}{rgb}{1.0, 0.9, 0.9}   
\definecolor{reddrop}{RGB}{204, 0, 0}    
\definecolor{greenup}{RGB}{0, 153, 51}   
\usepackage{url}

\usepackage[T1]{fontenc}

\usepackage[utf8]{inputenc}

\usepackage{microtype}

\usepackage{inconsolata}

\usepackage{graphicx}
\usepackage{hyperref}
\makeatletter
\newcommand{\nolinkfootnote}[1]{%
  \begingroup
  \let\@footnotemark\relax
  \let\@makefntext\noindent
  \protected@xdef\@thefnmark{}
  \footnotetext[0]{#1}
  \endgroup
}
\makeatother

\usepackage{enumitem}

\title{Sparse Growing Transformer: Training-Time Sparse Depth Allocation via Progressive Attention Looping}

\author{
\textbf{Yao Chen}$^{1,2*}$, 
\textbf{Yilong Chen}$^{1,2*}$, 
\textbf{Yinqi Yang}$^{3\ddagger}$, 
\textbf{Junyuan Shang}$^{3}$, 
\textbf{Zhenyu Zhang}$^{3}$, \\
\textbf{Zefeng Zhang}$^{1,2}$, 
\textbf{Shuaiyi Nie}$^{1,2}$, 
\textbf{Shuohuan Wang}$^{3}$, \\
\textbf{Yu Sun}$^{3}$, 
\textbf{Hua Wu}$^{3}$,
\textbf{HaiFeng Wang}$^{3}$,
\textbf{Tingwen Liu}$^{1,2\dagger}$ \\
$^1$ Institute of Information Engineering, Chinese Academy of Sciences \\
$^2$ School of Cyber Security, University of Chinese Academy of Sciences \\
$^3$ Baidu Inc. \\[3pt]
\small \texttt{\{chenyao2023, chenyilong, liutingwen\}@iie.ac.cn}
}

\begin{document}
\maketitle

~\nolinkfootnote{\textsuperscript{*} Equal contribution. 
\textsuperscript{\textdagger} Corresponding author. 
\textsuperscript{\textdaggerdbl} Project lead.
Our code is available at \url{https://github.com/YaoChen0203/Sparse-Growing-Transformer}.}

\begin{abstract}
Existing approaches to increasing the effective depth of Transformers predominantly rely on parameter reuse, extending computation through recursive execution.
Under this paradigm, the network structure remains static along the training timeline, and additional computational depth is uniformly assigned to entire blocks at the parameter level.
This rigidity across training time and parameter space leads to substantial computational redundancy during training.
In contrast, we argue that depth allocation during training should not be a static preset, but rather a progressively growing structural process. 
Our systematic analysis reveals a deep-to-shallow maturation trajectory across layers, where high-entropy attention heads play a crucial role in semantic integration. 
Motivated by this observation, we introduce the Sparse Growing Transformer (SGT).
SGT is a training-time sparse depth allocation framework that progressively extends recurrence from deeper to shallower layers via targeted attention looping on informative heads. 
This mechanism induces structural sparsity by selectively increasing depth only for a small subset of parameters as training evolves.
Extensive experiments across multiple parameter scales demonstrate that SGT consistently outperforms training-time static block-level looping baselines under comparable settings, while reducing the additional training FLOPs overhead from approximately $16$--$20\%$ to only $1$--$3\%$ relative to a standard Transformer backbone.
\end{abstract}

\section{Introduction}
Large language models (LLMs) have demonstrated exceptional knowledge integration and reasoning capabilities~\cite{DBLP:journals/corr/abs-2303-08774,zhu2024survey}. 
This success is fundamentally underpinned by the scalability of the Transformer architecture, which supports substantial expansion in both model width and, more crucially, network depth~\cite{vaswani2017attention,kaplan2020scaling,DBLP:journals/corr/abs-2203-15556}. 
For instance, leading models such as the LLaMA 3 and Qwen 3 series exhibit considerable depth across different parameter scales, ranging roughly from 30 to 120 layers~\cite{DBLP:journals/corr/abs-2407-21783,DBLP:journals/corr/abs-2505-09388}. 
Such depth is essential as it enables the learning of high-level hierarchical representations and is indispensable for the complex abstraction and compositional reasoning required for challenging tasks~\cite{tenney2019bert,chen2024can}. 
However, this increasing depth comes with a cost: stacking more layers inevitably expands the parameter count and memory footprint, leading to greater GPU memory consumption and deployment latency~\cite{hsieh2023distilling}.

To decouple model depth from parameter scale, recent studies have attempted to introduce recurrence into Transformer architectures, extending the effective computational depth to enable multi-step latent reasoning without inflating parameters~\cite{fan2024looped,chen-etal-2025-inner,geiping2025scaling}. 
However, existing recursive architectures adopt a training-time static topology and uniformly apply block-level reuse at the parameter level, lacking fine-grained differentiation of parameter roles (see Appendix~\ref{app: related} for a detailed related work discussion). 
This rigidity across training time and parameter space leads to a substantial and unnecessary increase in training computational cost.
Such coarse-grained recurrence contrasts with biological neural circuits, where autaptic self-connections emerge selectively within specific neuronal populations as evolutionarily and developmentally specialized microcircuit motifs, rather than being uniformly instantiated across all neurons~\cite{bacci2003functional,jiang2015developmental,pan2025functional}.
Guided by this perspective, we introduce a new paradigm of \textbf{\textit{Training-Time Structural Sparsity}}, in which model depth progressively self-grows during training through selective allocation of additional computation to fine-grained subsets of parameters.

To instantiate this paradigm during training, we require a reliable signal to guide structural growth. 
Recent studies have revealed a clear division of labor within Transformers: Self-Attention Modules drive dynamic in-context reasoning, while Feed-Forward Networks predominantly encode static statistical regularities~\cite{geva2021transformer,DBLP:journals/corr/abs-2209-11895,chen2025distributionalassociationsvsincontext}. 
Attention entropy, a metric quantifying the reasoning uncertainty and information content of attention heads, has been shown to correlate closely with contextual integration capabilities~\cite{zhang2025attention}. 
Naturally, attention entropy can serve as a critical indicator.
We further investigate the functional characteristics and training dynamics of attention entropy. 
Our analysis reveals two key findings: (1) Functionally, high-entropy heads act as pivotal hubs for semantic integration rather than noise; (2) Dynamically, layers follow a \textit{deep-to-shallow} maturation trajectory, where deeper layers differentiate earlier in training while shallower layers evolve more gradually.
Guided by these findings, we propose the \textbf{Sparse Growing Transformer (SGT)}, a self-growing depth architecture via progressive attention looping that realizes training-time structural sparsity. 
Specifically, SGT establishes a progressive evolution aligned with the maturation trajectory: recurrence is first activated in early-differentiating deeper layers and then gradually extended toward shallower ones, while within active layers, recursive looping is selectively applied to high-entropy heads to concentrate computation on critical semantic units.

Our contributions are summarized as follows:
\textbf{1)} We conduct a systematic analysis of attention entropy from both functional and dynamic perspectives. Our findings show that high-entropy heads serve as pivotal hubs for semantic integration, and that layers follow a depth-dependent evolutionary trajectory characterized by a \textit{deep-to-shallow} maturation pattern.
\textbf{2)} We formalize the paradigm of \textbf{\textit{Training-Time Structural Sparsity}} and instantiate it through the \textbf{Sparse Growing Transformer (SGT)}, which progressively allocates recurrent computation to high-entropy heads via fine-grained structural growth. We further provide a theoretical analysis demonstrating that concentrating recurrence on high-entropy components accelerates convergence.
\textbf{3)} Extensive experiments across multiple parameter scales demonstrate that SGT consistently outperforms training-time static block-level looping baselines under comparable settings, while reducing the additional training FLOPs overhead from approximately $16$--$20\%$ to only $1$--$3\%$ relative to a standard Transformer backbone.

\section{Preliminaries}
\label{sec: preliminaries}

To facilitate the understanding of the empirical observations in Section~\ref{Observations} and the formalization of our proposed method in Section~\ref{Method}, we establish the notation for the standard attention mechanism and introduce the definition of attention entropy.

\paragraph{Multi-Head Attention.}
We formulate the input hidden states as $H \in \mathbb{R}^{N \times d}$, where $N$ is the sequence length and $d$ is the model dimension. 
We focus on the Multi-Head Attention (MHA) module, which processes $H$ through multiple parallel attention heads.
For the $i$-th head, the input is projected into queries $Q^{(i)}$, keys $K^{(i)}$, and values $V^{(i)}$ using parameter matrices $W_Q^{(i)}, W_K^{(i)}, W_V^{(i)} \in \mathbb{R}^{d \times d_h}$, where $d_h$ is the head dimension. The attention matrix $A^{(i)} \in \mathbb{R}^{N \times N}$ is computed as:
\begin{equation}
    A^{(i)} = \operatorname{softmax}\left(\frac{Q^{(i)} (K^{(i)})^\top}{\sqrt{d_h}}\right).
    \label{eq: attention_score}
\end{equation}
The output of a single head is then given by projecting the weighted values:
\begin{equation}
    \operatorname{Attn}^{(i)}(H) = (A^{(i)} H W_V^{(i)}) W_O^{(i)},
\end{equation}
where $W_O^{(i)} \in \mathbb{R}^{d_h \times d}$ represents the output projection matrix of the $i$-th head. 
The outputs from all parallel heads are summed and then added to the input $H$ via a residual connection to produce the post-attention hidden state $H_{\text{post}}$:
\begin{equation}
    H_{\text{post}} = H + \sum_{i=1}^{N_{head}} \operatorname{Attn}^{(i)}(H),
\end{equation}
where $N_{head}$ denotes the number of attention heads per layer.
This resulting representation $H_{\text{post}}$ is subsequently processed by the Feed-Forward Network.

\paragraph{Attention Entropy.}
Attention entropy serves as a metric for quantifying the concentration and uncertainty of information flow. 
Considering computational efficiency and the fact that the final token aggregates information from the entire preceding context (Appendix~\ref{app: Attention Entropy Computation and Rationale}), we focus on the attention entropy of the final token.
To eliminate the bias introduced by sequence lengths, we utilize the \textit{length-normalized attention entropy}. Formally, for the $i$-th head, this is defined as:
\begin{equation}
    \mathcal{E}^{(i)} = -\frac{1}{\log N} \sum_{j=1}^{N} A^{(i)}_{N, j} \log A^{(i)}_{N, j},
    \label{eq: normalized-entropy}
\end{equation}
where $\log N$ serves as the length normalization term, and $A^{(i)}_{N, j}$ denotes the attention weight in the $i$-th head assigned by the $N$-th token to the $j$-th input token, with the resulting metric $\mathcal{E}^{(i)}$ lying in the interva $[0, 1]$.
Based on this, we further define the \textit{layer-wise attention entropy} to capture the uncertainty of the $l$-th layer. 
This is computed by averaging the normalized entropies across all $N_{head}$ heads within the $l$-th layer:
\begin{equation}
    \bar{\mathcal{E}}^{(l)} = \frac{1}{N_{head}} \sum_{i=1}^{N_{head}} \mathcal{E}^{(l, i)}.
    \label{eq:layer_mean_entropy}
\end{equation}
In the following section, we use $\mathcal{E}^{(i)}$ and $\bar{\mathcal{E}}^{(l)}$ as the analytical lens to investigate attention entropy, providing the insights for the design of SGT.

\section{Observations}
\label{Observations}
By visualizing attention entropy across representative open-source models, we observe that \textbf{\textit{attention entropy is closely correlated with the model architecture}}.
Specifically, this correlation manifests as a universal \textit{three-phase} progression at the layer level (Figure~\textbf{\ref{fig: layer_atten_entropy}}) and highly consistent head-level patterns across varying parameter scales with identical layer-head configurations (Figure~\textbf{\ref{fig: Layer_Head_Attention_Entropy_Patterns}}), establishing attention entropy as a stable architectural attribute.
Building upon this foundation, we further investigate the functional characteristics and training dynamics of attention entropy.

\begin{figure}[!t]
    \centering
    \includegraphics[width=\linewidth]{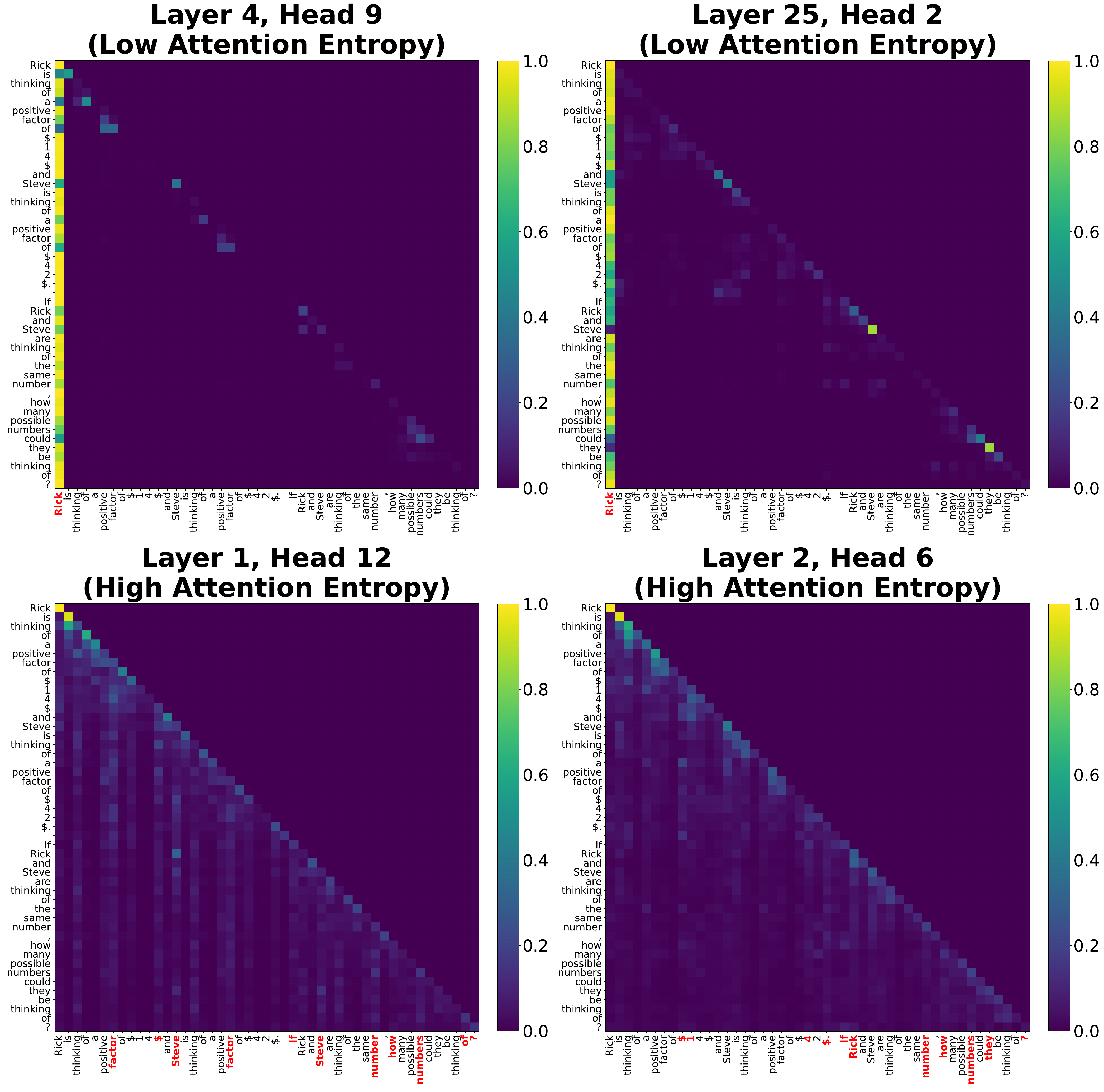} 
        \caption{Visualization of attention heatmap in low- and high-entropy heads from Qwen3-0.6B. Along the horizontal axis, tokens highlighted in red denote the subset that receives the top 50\% of the attention from the final query position (details in Appendix ~\ref{app: Qualitative Analysis}).}
        \label{fig: sample_visual_gsm8k}
\end{figure}

\begin{figure}[!t]
    \centering
    \includegraphics[width=\linewidth]{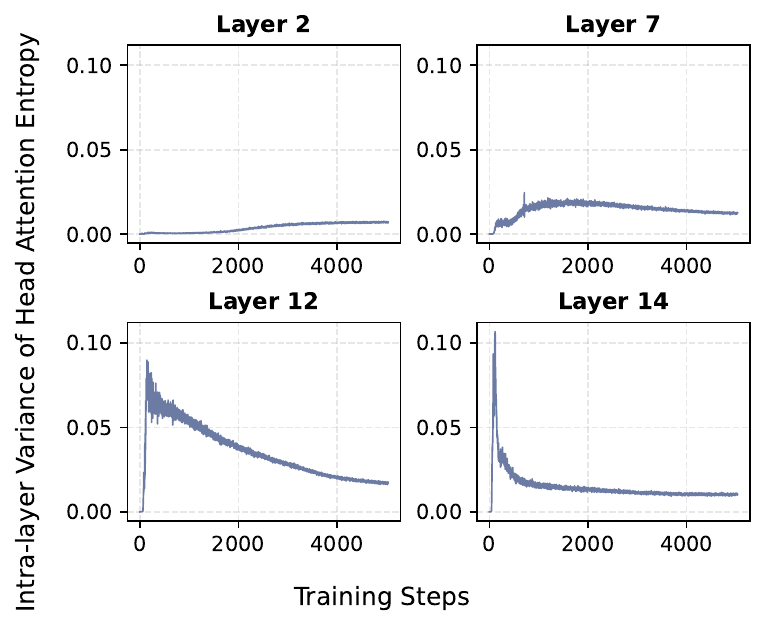} 
        \caption{Evolution of Intra-layer Variance in Head-wise Attention Entropy Across Training Steps for Models Pretrained from Scratch (we visualize four representative layers; see Figure~\ref{fig: training/head_variance_unified} for detailed results)}
        \label{fig: head_variance_selected_layers}
\end{figure}

\vspace{0.5em} 

\myobservation{ob:functional_dichotomy}{High-entropy heads serve as information-dense hubs critical for global reasoning.}
We identified heads in the Qwen3-0.6B model that consistently exhibit high or low entropy across 11 diverse tasks, and then examined these groups (details in Appendix~\textbf{\ref{app: Entropy-based Head Selection}}).
Qualitatively, low-entropy heads predominantly exhibit attention sink behavior, focusing on local tokens. In contrast, high-entropy heads distribute attention selectively across globally distributed tokens rather than uniformly, capturing critical evidence for reasoning (Figure\textbf{~\ref{fig: sample_visual_gsm8k},~\ref{fig: sample_visual_math500}} and\textbf{~\ref{fig: sample_visual_mmlu_college_physics}}).
Quantitatively, head masking experiments demonstrate that disabling these high-entropy heads results in the most substantial performance degradation (Table~\textbf{\ref{tab: mask_results}}).
Based on these findings, we select high-entropy heads as the targets for allocating additional computational depth to facilitate iterative \textit{inner thinking}.

\vspace{0.5em} 

\myobservation{ob:maturation_trajectory}{Attention entropy reveals a deep-to-shallow maturation trajectory: deeper layers differentiate early in training, while shallower layers evolve more gradually.}
To investigate the dynamic properties of attention entropy, we train a 573M LLaMA model from scratch and monitor head-wise attention entropy across all layers (Figure~\textbf{\ref{fig: training/head_entropy}}). 
We observe a general downward trend in entropy accompanied by significant discrepancy, which we quantify using intra-layer entropy variance (Figure~\textbf{\ref{fig: head_variance_selected_layers}} and Figure~\textbf{\ref{fig: training/head_variance_unified}}). 
Specifically, for intra-layer entropy variance, deep layers exhibit a rapid surge followed by a decline, whereas shallow layers show a steady increase. 
Interpreting such variance as a measure of functional differentiation (Observation~\ref{ob:functional_dichotomy}), we find that deep-layer heads differentiate earlier, while shallow-layer heads evolve over a longer horizon. These depth-dependent evolutionary patterns serve as the primary motivation for our progressive growth strategy.

\begin{figure}[!t]
    \centering
    \includegraphics[width=\linewidth]{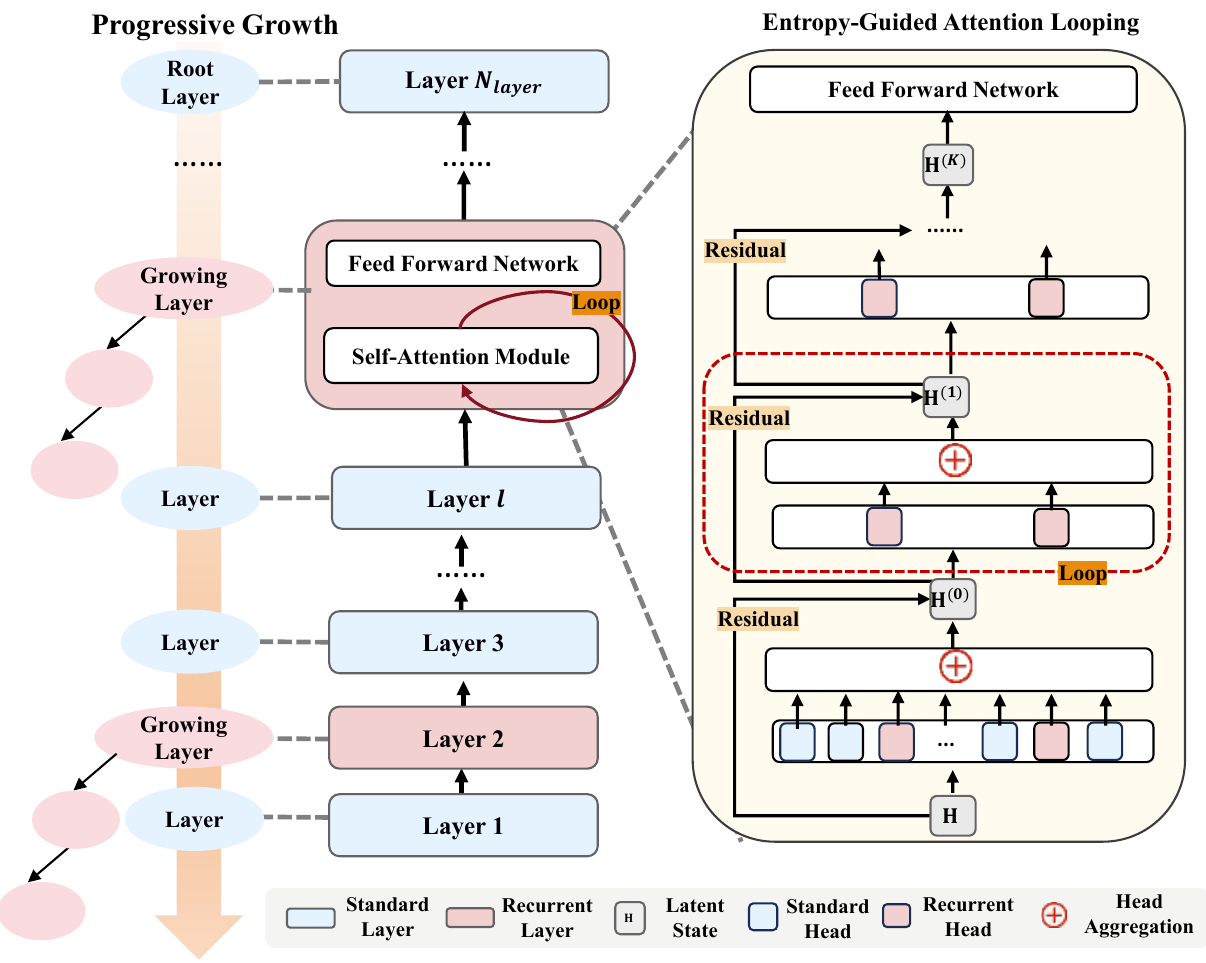} 
       \caption{The Sparse Growing Transformer (SGT) architecture. It implements Training-Phase Structural Sparsity via a deep-to-shallow progressive growth schedule (Left) and selective high-entropy head looping (Right).}
        \label{fig: method}
\end{figure}

\section{Method}
\label{Method}
Building upon our above observations, we propose the Sparse Growing Transformer (SGT), a self-growing depth architecture via progressive attention looping (Figure~\ref{fig: method}). 
Specifically, SGT progressively grows fine-grained recurrent micro-circuits during training, selectively allocating additional depth to high-value parameters within a block rather than uniformly reapplying entire layers. 
This growth unfolds along the \textit{deep-to-shallow} maturation trajectory and concentrates computation on a subset of functional components, thereby inducing training-time structural sparsity and reducing substantial pretraining overhead.
This section details the SGT mechanism composed of \textit{Entropy-Guided Attention Looping} and \textit{Progressive Growth Training}.

\subsection{Entropy-Guided Attention Looping}
\label{Entropy-Guided Attention Looping}
Guided by our Observation~\textbf{\ref{ob:functional_dichotomy}} that high-entropy heads serve as information-dense hubs critical for global reasoning, we identify them as the optimal candidates for allocating additional computational depth, rather than indiscriminately applying recurrence to entire blocks.
Formally, following the notation in Section~\ref{sec: preliminaries}, we denote the post-attention hidden state $H_{\text{post}}$ as $H^{(0)}$, which serves as the initial state for the looping mechanism. 
Subsequently, we apply a selective head-level recursive loop defined by an efficient \textit{allocation operator} $\mathcal{A}$, which maps the layer's attention entropy distribution to a specific set of active heads $\mathcal{S}$ for optimized information flow. 
Let $\mathcal{S}$ be the set of active heads designated for looping, and let $k$ denote the $k$-th loop step. 
For step $k = 1,\dots,K$, the recursive update is formulated as:
\begin{equation}
    H^{(k)} = H^{(k-1)} + \sum_{i \in \mathcal{S}} \operatorname{Attn}^{(i)}(H^{(k-1)}).
    \label{eq:sga_update}
\end{equation}

Specifically, for a given layer, we compute the length-normalized attention entropy $\mathcal{E}^{(i)}$ (Eq.~\ref{eq: normalized-entropy}) for each head $i$. 
The allocation operator $\mathcal{A}$ then identifies the top-$h$ heads exhibiting the highest entropy to form the active recursive set $\mathcal{S}$:
\begin{equation}
    \mathcal{S} = \mathcal{A}(\{\mathcal{E}^{(i)}\}) = \operatorname*{arg\,top}_{h} \left( \{\mathcal{E}^{(i)}\}_{i=1}^{N_{head}} \right).
    \label{eq:head_selection}
\end{equation}

Upon completion of the grown depth $K$, the final state $H^{(K)}$ is passed to the Feed-Forward Network. 
Notably, we provide a theoretical justification in Appendix~\textbf{\ref{app: Theoretical Analysis}}, demonstrating that applying recursive processing to high-entropy heads accelerates convergence toward the quasi-stationary state in recursive dynamics. 
By concentrating computational resources exclusively on these high-uncertainty units, this mechanism enables the SGT to form fine-grained recurrent micro-circuits. 
The emergence and expansion of these structures along the training process are governed by a \textit{Progressive Growth Training} strategy.

\subsection{Progressive Growth Training}
\label{Progressive Growth Training}
To operationalize this \textit{Progressive Growth Training} strategy, we design a progressive growth schedule that aligns structural evolution with the model’s intrinsic maturation trajectory (Observation~\textbf{\ref{ob:maturation_trajectory}}). 
By activating recurrent micro-circuits gradually over the training timeline rather than from the outset, this design induces temporal sparsity during training and substantially reduces pretraining computational overhead.

This progressive growth mechanism is formalized through a growth operator $\mathcal{O}$ that updates the looping configuration along training time.
At training step $t$, the architectural state is defined as
\begin{equation}
\Theta_t := \{\mathcal{L}_t,\; l_t^*,\; \{K_l\}_{l \in \mathcal{L}_t} \},
\label{eq:arch_state_revised}
\end{equation}
where $\mathcal{L}_t$ is the set of active looping layers, $l_t^*$ denotes the currently growing layer, $K_l$ is the current loop depth for layer $l \in \mathcal{L}_t$.
The growth operator $\mathcal{O}$ updates $\Theta_t$ based on the attention entropy $\bar{\mathcal{E}}$
\begin{equation}
\Theta_{t+\Delta t} = \mathcal{O}(t, \bar{\mathcal{E}}; \Theta_t).
\end{equation}

Initially, during the \textit{Warm-up Phase} ($t < t_{\text{start}}$), 
the model trains as a standard Transformer without structural growth. 
The warm-up period allows attention entropy to stabilize, providing a reliable signal for subsequent structural decisions.
Upon entering the \textit{Growing Phase} ($t \ge t_{\text{start}}$), the growth operator $\mathcal{O}$ periodically updates the architectural state $\Theta_t$ every $\Delta t$ steps.
At each growth step, all layers are ranked to form a candidate pool $\mathcal{C}$ according to their mean attention entropy
\begin{equation}
\mathcal{C} = \operatorname*{arg\,top}_{L}\!\left(\{\bar{\mathcal{E}}^{(l)}\}\right).
\end{equation}
Following the \textit{deep-to-shallow} maturation trajectory (Observation~\ref{ob:maturation_trajectory}), structural expansion proceeds in an ordered manner. 
If the currently growing layer $l^*$ remains among the candidate pool $\mathcal{C}$ and has not reached the maximum depth $K_{\max}$, its loop depth is updated as $K_{l^*} \leftarrow K_{l^*} + 1$.
Otherwise, provided that the number of active looping layers has not yet reached $L$, a new layer is activated. 
The newly selected layer is the deepest eligible candidate not yet in $\mathcal{L}_t$, subject to the constraint that it lies deeper than the shallowest active looping layer:
\begin{equation}
l^* = \max \{\, l \mid l \in \mathcal{C},\; l \notin \mathcal{L}_t,\; l < \min(\mathcal{L}_t) \,\}.
\end{equation}
This rule enforces a monotonic deep-to-shallow expansion, anchoring growth in earlier-stabilized deeper layers before extending toward shallower ones, preserving training stability during structural expansion.

Once the target configuration is reached, the training enters the \textit{Fixed Phase}, in which the architecture is frozen to allow full parameter convergence.
The complete training procedure is provided in \textbf{Algorithm~\ref{alg:SGT}}.
Through this staged evolution, computational depth becomes a progressively expanding yet temporally sparse structure, enabling SGT to gradually form fine-grained recurrent micro-circuits aligned with intrinsic entropy dynamics.

\begin{table*}[!t]
\centering
\small
\definecolor{ref_blue}{RGB}{230, 242, 255} 
\definecolor{ref_red}{RGB}{255, 235, 235}
 \definecolor{ref_yellow}{RGB}{255, 250, 230}
\renewcommand{\arraystretch}{1.3}
\setlength{\tabcolsep}{4.5pt}
\begin{tabular}{l|c|ccccccccc}
\toprule
\multirow{2}{*}{\textbf{Model}} & \multirow{2}{*}{\textbf{FLOPs}} & \multicolumn{9}{c}{\textbf{Reasoning \& Knowledge} ($\uparrow$)} \\
\cmidrule(lr){3-10}
 & & \textbf{ARC-E} & \textbf{WG} & \textbf{SIQA} & \textbf{Hella.} & \textbf{OBQA} & \textbf{CSQA} & \textbf{BA} & \textbf{MMLU} & \textbf{Avg} \\
\midrule
\rowcolor{ref_yellow}
\textbf{Vanilla (275M)} &42.83 &44.56  &49.57 &41.30 &30.89 &27.40 &29.24 &22.70 &24.80 &33.81 \\
\rowcolor{ref_blue}
\textbf{Block Loop} &51.50 (\textcolor{greenup}{+20.24\%}) &44.74  &50.36  &41.10 &\textbf{31.85} &27.60 &\textbf{29.48} &21.80 &25.38 &34.04 \\
\rowcolor{ref_red}
\textbf{SGT(K=1)} &43.46 (\textcolor{greenup}{1.47\%}) &45.09  &51.46 &41.40 &31.21 &\textbf{28.00} &29.16 &24.07 &25.19 &34.48 \\
\rowcolor{ref_red}
\textbf{SGT(K=2)} &43.88 (\textcolor{greenup}{+2.45\%}) &45.79  &\textbf{52.33} &\textbf{42.38} &31.41 &26.80 &28.75 &\textbf{24.47} &25.45 &\textbf{34.64}\\
\rowcolor{ref_red}
\textbf{SGT(K=3)} &44.21 (\textcolor{greenup}{+3.22\%}) &\textbf{46.32}  &50.91 &41.81 &31.43 &27.40 &28.83 &22.37 &\textbf{26.16} &34.40 \\
\midrule 
\rowcolor{ref_yellow}
\textbf{Vanilla (573M)} & 87.41 &47.01  &50.35   &41.86  &35.24  &\textbf{29.00}  &29.24  &24.03  &26.37  &35.39 \\
\rowcolor{ref_blue}
\textbf{Block Loop} &101.76 (\textcolor{greenup}{+16.42\%})  &48.07  &50.19   &42.27  &\textbf{36.11} &28.00  &30.30  &24.90  &26.68  &35.82    \\
\rowcolor{ref_red}
\textbf{SGT(K=1)} &88.15 (\textcolor{greenup}{+0.85\%}) &45.61  &49.64   &42.22 &35.36 &\textbf{29.00} &\textbf{31.29} &24.93 &\textbf{27.42} &35.68 \\
\rowcolor{ref_red}
\textbf{SGT(K=2)} &88.60 (\textcolor{greenup}{+1.36\%}) &47.37  &50.20  &\textbf{42.47}  &35.42 &28.88  &28.50  &24.80  &27.10  &35.59  \\
\rowcolor{ref_red}
\textbf{SGT(K=3)} &88.96 (\textcolor{greenup}{+1.77\%}) &\textbf{50.00}  &\textbf{53.43} &42.22  &35.25 &27.40  &29.89  &\textbf{26.07}  &27.03  &\textbf{36.41}  \\
\midrule 
\rowcolor{ref_yellow}
\textbf{Vanilla (1.2B)} &175.76 &50.00  &50.19 &43.39 &\textbf{41.02} &30.00 &31.69 &\textbf{26.60} &\textbf{27.76} &37.58 \\
\rowcolor{ref_blue}
\textbf{Block Loop} &205.65 (\textcolor{greenup}{+17.01\%}) &50.17 &51.38 &42.42 &41.01 &31.00 &31.28 &25.83 &27.01 &37.51 \\
\rowcolor{ref_red}
\textbf{SGT(K=1)} &177.06 (\textcolor{greenup}{+0.74\%})  &50.88 &51.78  &\textbf{43.50} &40.30 &30.00 &31.36 &25.43 &27.65 &37.61 \\
\rowcolor{ref_red}
\textbf{SGT(K=2)} &177.85 (\textcolor{greenup}{+1.19\%}) &\textbf{51.05} &\textbf{52.49} &43.14 &40.62 & 31.00 &32.11 &25.33 &27.52 &\textbf{37.91}  \\
\rowcolor{ref_red}
\textbf{SGT(K=3)} &178.48 (\textcolor{greenup}{+1.55\%}) &49.30  &51.54 &43.04 &40.51 &\textbf{31.40} &\textbf{32.27} &25.80 &26.92 &37.60 \\
\bottomrule
\end{tabular}
\caption{Main experimental results. FLOPs denote the total training compute, measured in units of $10^{18}$. The best results at each model scale are shown in \textbf{bold}. $K$ denotes $K_{\max}$, the upper bound on the loop depth for the selected loop layers.}
\label{tab: comparative_results}
\end{table*}

\section{Experiments}

\subsection{Setup}
\paragraph{Data.}
C4~\cite{raffel2020exploring} is a large-scale, cleaned web text corpus widely used for language model pretraining. 
We train all model variants on a 20B-tokens subset of C4 and evaluate perplexity on a validation set constructed from the same corpus.

\paragraph{Training.}
All experiments are conducted using the OLMo open-source training framework~\cite{DBLP:conf/acl/GroeneveldBWBKT24}. 
We adopt the AdamW optimizer with a learning rate of \(6.0 \times 10^{-4}\) applied to all parameters. 
Models are trained with a sequence length of 4096 and a global batch size of 1024 for 5035 training steps. 
Training is performed on 8 NVIDIA A100 GPUs with 40 GB of memory each.
All model scales (Table~\ref{tab: model_detailed_configuration}) adopt the LLaMA-style architecture and are pre-trained from scratch under the same training setup to ensure fair comparison.

\paragraph{Evaluation.}
Our experimental evaluation assesses model capabilities across a diverse set of Reasoning \& Knowledge tasks.
Specifically, We report accuracy on ARC-Easy (ARC-E)~\cite{clark2018think}, SocialIQA (SIQA)~\cite{welbl2017crowdsourcing}, OpenBookQA (OBQA)~\cite{OpenBookQA2018}, SCIQ~\cite{welbl2017crowdsourcingmultiplechoicescience}, WinoGrande (WG)~\cite{sakaguchi2021winogrande}, BasicArithmetic (Basic Arith)~\cite{brown2020languagemodelsfewshotlearners}, CommonsenseQA (CSQA)~\cite{talmor-etal-2019-commonsenseqa}, HellaSwag (Hella.)~\cite{zellers2019hellaswag}, and the four subtasks of MMLU~\cite{hendryckstest2021} (STEM, Social Sciences, Humanities, and Other).

\paragraph{Baseline.}
We compare the following methods across varying model scales. 
1) \emph{Vanilla}: The standard Transformer architecture without any recurrence mechanism, serving as the reference baseline.
2) \emph{Block Loop}: A baseline implementing block-level recurrence by iteratively reusing entire Transformer blocks.
3) Sparse Growing Transformer (SGT): Our proposed self-growing depth architecture utilizing entropy-guided attention looping. We configure it with $h=2,L=3$ and evaluate three variants with $K_{\text{max}} \in \{1, 2, 3\}$.
To ensure a fair comparison, for \emph{Block Loop}, we select the three layers exhibiting the highest entropy from a vanilla model pre-trained for $t_{\text{start}}$ steps as the fixed looping layers (more details in Appendix~\ref{app: Experimental Details and Supplementary Analysis}).

\begin{figure}[!t]
    \centering
    \includegraphics[width=\linewidth]{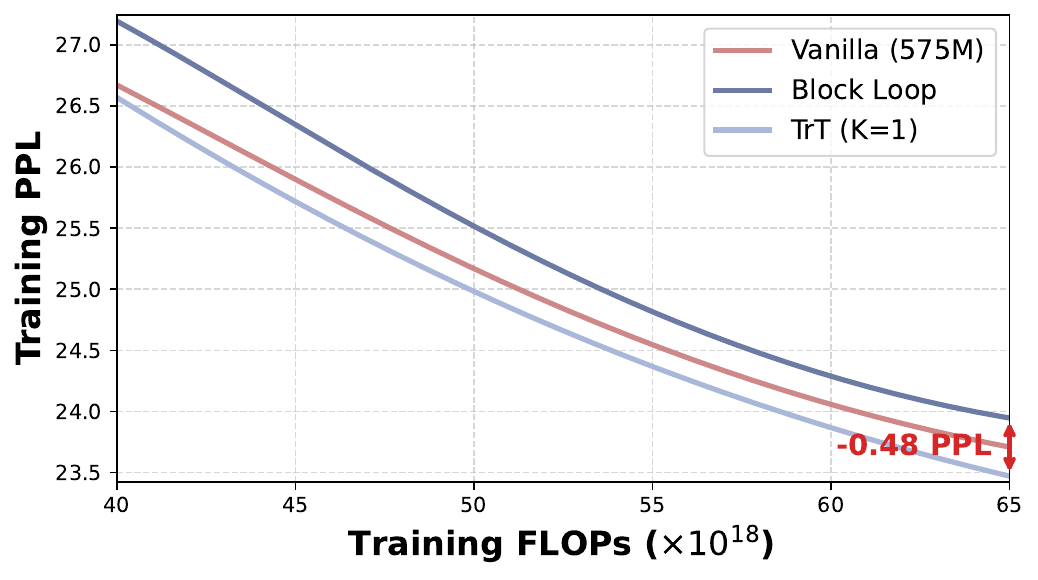} 
        \caption{Comparative convergence trajectories of training perplexity (PPL) versus cumulative training FLOPs for the 573M model scale.}
    \label{fig: train_ppl}
\end{figure}

\subsection{Main Results}

\paragraph{Improvements on Reasoning Tasks.}
Notably, SGT achieves particularly pronounced improvements on reasoning-intensive benchmarks. On tasks such as WG, ARC-E, and CSQA, which require complex relational integration and multi-step inference, SGT demonstrates substantial gains of 3.24, 1.93, and 0.99 points, respectively, over the Block Loop baseline at the 573M scale. Crucially, SGT achieves these superior results with significantly higher efficiency: while Block Loop incurs a heavy +16.42\% computational overhead, our method ($K=3$) surpasses Block Loop's average performance by 0.59 points while increasing FLOPs by only +1.77\%, which is approximately an order of magnitude less additional cost. These results corroborate that high-entropy heads play a pivotal role in semantic dependency construction, and that selectively deepening their computation enables more thorough inner thinking without the burden of redundant parameters.

\paragraph{Training Efficiency and Convergence.}
We evaluate training efficiency by comparing the convergence trajectories of different strategies relative to their computational consumption. As illustrated in Figure~\ref{fig: train_ppl}, when aligned by equivalent training FLOPs, SGT consistently maintains a lower training perplexity compared to both the Vanilla and Block Loop baselines. Specifically, at a cumulative compute of $65 \times 10^{18}$ FLOPs, SGT achieves a PPL that is 0.48 points lower than the Vanilla baseline. In contrast, the Block Loop method exhibits higher perplexity than Vanilla at the same FLOP budget, indicating computational inefficiency. This demonstrates that by strategically allocating depth to high-entropy components, SGT not only improves final performance but also significantly accelerates convergence efficiency per unit of compute.

\subsection{Ablation Studies}
\label{sec: ablation}
\subsubsection{Impact of Loop Components}
To evaluate the specific contributions of distinct model components to iterative refinement, we conduct ablation experiments on the 573M model by applying loop operations at varying depths (Layer 1, 7, and 15). 
We compare four configurations: Block Loop, All Head Loop, High-Entropy Head Loop (top-2 highest), and Low-Entropy Head Loop (top-2 lowest). 
In all configurations, the loop operation is restricted to a single iteration on the designated layer to isolate component effects. Further details are provided in Appendix~\ref{app: Details of Ablation Experiments}.

\paragraph{Targeted allocation of computational depth to high-entropy components yields the optimal efficiency-performance ratio.}
As shown in Table~\ref{tab: ablation}, the High-Entropy Head Loop consistently achieves the most significant PPL reduction across all tested depths while incurring the smallest increase in FLOPs (+0.38\%). Specifically, within the attention mechanism, looping the top-2 high-entropy heads significantly outperforms both looping all heads and looping low-entropy heads. These empirical findings are further corroborated by our theoretical analysis in Appendix~\ref{app: Theoretical Analysis}, which establishes that recursive dynamics on high-entropy matrices exhibit faster convergence due to their superior token-mixing properties.

\begin{table}[!t]
\centering
\small
\definecolor{darkgreen}{rgb}{0.0, 0.5, 0.0}
\renewcommand{\arraystretch}{1.2}
\newcommand{\diff}[1]{\textcolor{darkgreen}{ (#1)}}
\setlength{\tabcolsep}{6pt}
\begin{tabular}{l c c}
\toprule
\textbf{Loop Component} & \textbf{FLOPs} ($\times 10^{18}$) & \textbf{Perplexity} $\downarrow$ \\
\midrule
Vanilla & 87.41 & 23.973 \\
\midrule
\multicolumn{3}{c}{\textbf{Layer 2}} \\
\midrule
Block Loop & 92.20 \diff{+5.48\%} & 23.560 \diff{-0.413} \\
Attention Loop  & 89.01 \diff{+1.83\%} & 23.503 \diff{-0.470} \\
High-Ent. Loop & 87.74 \diff{+0.38\%} & \textbf{23.452} \diff{-0.521} \\
Low-Ent. Loop & 87.74 \diff{+0.38\%} & 23.502 \diff{-0.471} \\
\midrule
\multicolumn{3}{c}{\textbf{Layer 8}} \\
\midrule
Block Loop & 92.20 \diff{+5.48\%} & 23.836 \diff{-0.137} \\
Attention Loop & 89.01 \diff{+1.83\%} & 23.537 \diff{-0.436} \\
High-Ent. Loop & 87.74 \diff{+0.38\%} & \textbf{23.487} \diff{-0.486} \\
Low-Ent. Loop & 87.74 \diff{+0.38\%} & 23.505 \diff{-0.468} \\
\midrule
\multicolumn{3}{c}{\textbf{Layer 16}} \\
\midrule
Block Loop & 92.20 \diff{+5.48\%} & 23.711 \diff{-0.262} \\
Attention Loop & 89.01 \diff{+1.83\%} & 23.711 \diff{-0.262} \\
High-Ent. Loop & 87.74 \diff{+0.38\%} & \textbf{23.503} \diff{-0.470} \\
Low-Ent. Loop & 87.74 \diff{+0.38\%} & 23.797 \diff{-0.176} \\
\bottomrule
\end{tabular}
\caption{Ablation study of different loop components at varying layer depths. We report total training FLOPs (in units of $10^{18}$) and perplexity (PPL) on the validation set, where lower is better. High-Ent. Loop denotes looping the top-2 highest-entropy heads, while Low-Ent. Loop denotes looping the top-2 lowest-entropy heads.}
\label{tab: ablation}
\end{table}

\paragraph{Block-level recurrence incurs computational redundancy due to a lack of fine-grained parameter discrimination.}
Our results indicate that attention-level looping consistently outperforms or matches block-level looping regardless of layer depth. This empirical finding supports our hypothesis that block-level approaches lack deep insights into the granular roles of distinct parameters. By treating the Transformer block as a monolithic unit, these methods fail to isolate the components truly effective for iterative refinement. Indiscriminately applying recurrence to the entire block results in significant computational redundancy without yielding proportional performance gains.

\begin{table}[!t]
\centering
\small
\renewcommand{\arraystretch}{1.3}
\setlength{\tabcolsep}{3.7pt}
\begin{tabular}{lccccc}
\toprule
\textbf{Method} & CSQA & SIQA & SCIQ & $\text{MMLU}_\text{STEM}$ & Avg \\
\midrule
Vanilla & 29.24 & 41.86 & 71.00 & 26.28 & 42.10 \\
S2D & 30.39 & 41.40 & 71.80 & 27.40 & 42.75 \\
D2S (Ours) & \textbf{31.29} & \textbf{42.22} & \textbf{72.00} & \textbf{28.46} & \textbf{43.49} \\
\bottomrule
\end{tabular}
\caption{Ablation study on growth direction. Experiments are conducted on the 573M model with SGT ($K=1$) configuration. \textbf{D2S} denotes deep-to-shallow growth, while \textbf{S2D} represents shallow-to-deep growth.}
\label{tab: growth_direction}
\end{table}

\begin{figure}[!t]
    \centering
    \includegraphics[width=\linewidth]{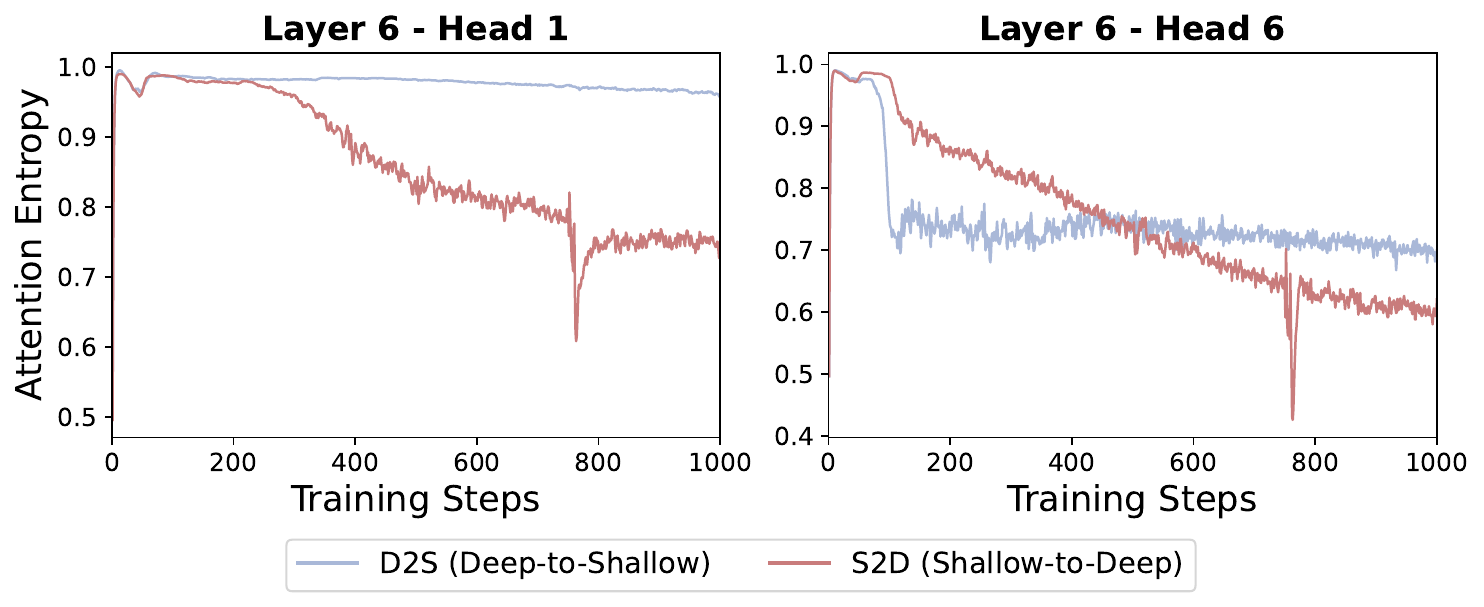} 
        \caption{Attention entropy dynamics of two heads in Layer 6 during training, including the warm-up phase and layer selection phase, for models trained with D2S and S2D strategies (more visualizations in Figure~\ref{fig: direction_stable}).}
    \label{fig: direction_stable_head1_head6}
\end{figure}

\begin{table}[!t]
\centering
\small
\renewcommand{\arraystretch}{1.2}
\setlength{\tabcolsep}{5.5pt}
\definecolor{darkgreen}{rgb}{0.0, 0.5, 0.0}
\definecolor{darkred}{rgb}{0.7, 0.0, 0.0}
\begin{tabular}{l|ccc}
\toprule
\textbf{Setting} & \textbf{Vanilla} & \textbf{Block Loop} & \textbf{High-Ent. Loop} \\
\midrule
1024$\times$1 & 24.21 & 24.18 (\textcolor{darkgreen}{$-0.03$}) & 24.16 (\textcolor{darkgreen}{$-0.05$}) \\
1024$\times$2 & 57.38 & 58.80 (\textcolor{darkred}{$+1.42$}) & 56.62 (\textcolor{darkgreen}{$-0.76$}) \\
1024$\times$3 & 116.94 & 119.48 (\textcolor{darkred}{$+2.54$}) & 111.66 (\textcolor{darkgreen}{$-5.28$}) \\
1024$\times$4 & 180.70 & 175.05 (\textcolor{darkgreen}{$-5.65$}) & 171.90 (\textcolor{darkgreen}{$-8.80$}) \\
\bottomrule
\end{tabular}
\caption{Long Context Extrapolation Results (PPL). \emph{Setting} denotes the sequence length for extrapolation evaluation. The configurations for High-Ent. Loop and Block Loop follow the \emph{Layer 2 settings} in Table~\ref{tab: ablation}.}
\label{tab: extrapolation}
\end{table}

\begin{figure*}[!t]
    \centering
    \includegraphics[width=\linewidth]{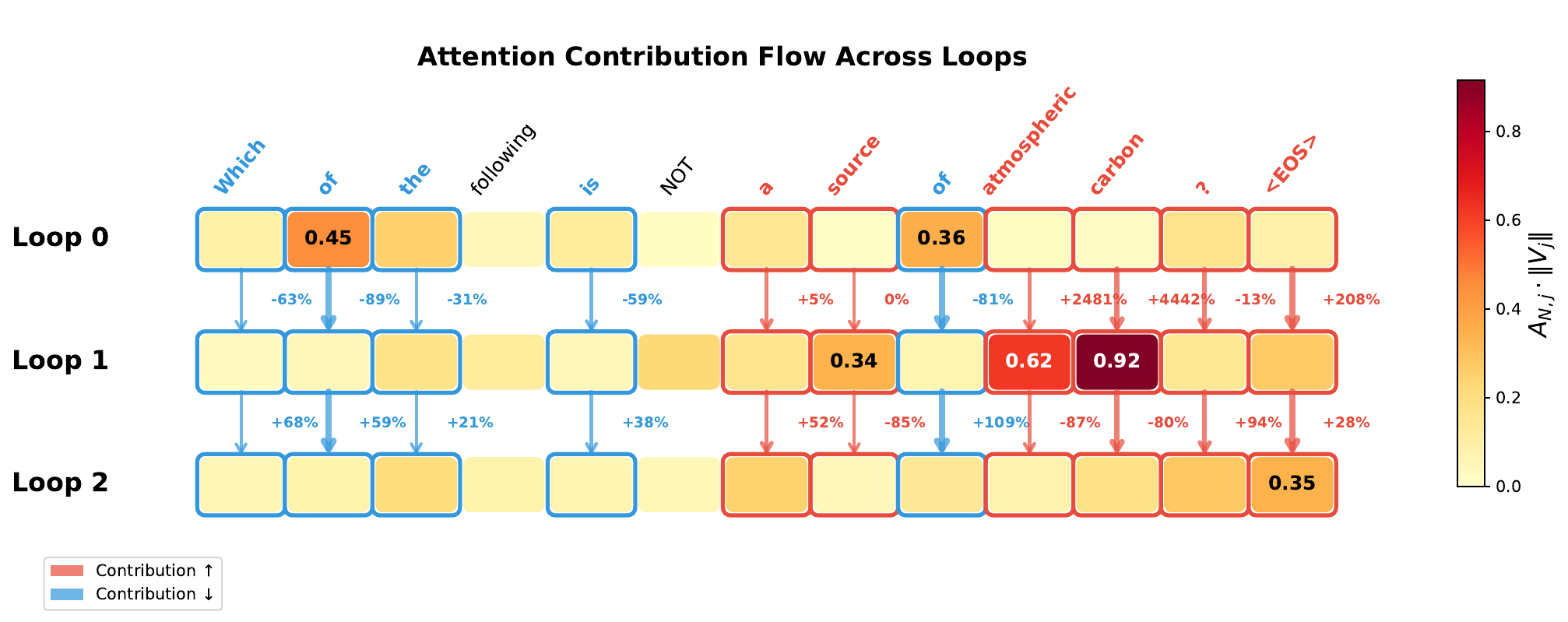} 
  \caption{Attention contribution flow across loops in a high-entropy head (Layer 2, Head 15). Each cell shows the weighted attention contribution $C_j = A_{N,j} \cdot \|V_j\|_2$ from the final token to position $j$, with color intensity indicating magnitude. Red and blue borders mark tokens with the largest contribution increase and decrease from Loop 0 to Loop 2, respectively. Arrows with percentage labels denote inter-loop changes. The visualization shows attention progressively shifting from syntactic elements toward task-critical and answer-relevant tokens.}
    \label{fig: attention_flow2}
\end{figure*}

\subsubsection{Impact of Growth Direction}

\paragraph{Performance Comparison.}
Guided by Observation~\ref{ob:maturation_trajectory}, which establishes that deeper layers differentiate and stabilize earlier, we design a Deep-to-Shallow (D2S) progressive growth strategy (Section~\ref{Progressive Growth Training}). To validate this design choice, we compare our proposed D2S approach against a reversed Shallow-to-Deep (S2D) variant on the 573M model with SGT ($K=1$) configuration. The only modification in the S2D variant is the reversal of the layer selection order during the progressive growth phase. As presented in Table~\ref{tab: growth_direction}, the D2S strategy consistently outperforms the S2D variant across all evaluation metrics. This empirical evidence validates our hypothesis that initiating structural growth from naturally stable deeper layers is crucial for maximizing the performance of the SGT.

\paragraph{Stability Analysis.}
To further investigate the underlying causes of this performance disparity, we analyze the attention entropy dynamics during the dynamic layer selection phase. We observe that the S2D strategy exhibits frequent entropy spikes, defined as abrupt fluctuations across multiple attention heads, indicating severe training instability. For instance, Layer 6 of the S2D model displays the most pronounced volatility (Figure~\ref{fig: direction_stable_head1_head6} and~\ref{fig: direction_stable}). In contrast, our D2S strategy maintains smooth entropy dynamics throughout the selection phase. This confirms that aligning the growth schedule with the model's intrinsic maturation trajectory ensures training stability, providing a robust foundation for progressive structural expansion.

\subsection{Analysis}

\paragraph{Allocating computational depth to high-entropy heads enhances long-context generalization.}
To further investigate generalization capabilities, we conduct long-context extrapolation experiments. We train models on sequences of length 1024 and evaluate them on progressively longer sequences (2048, 3072, and 4096 tokens), representing $2\times$, $3\times$, and $4\times$ extrapolation relative to training length. Crucially, these evaluations are conducted directly on extended sequences without utilizing any positional embedding extrapolation techniques (e.g., YARN~\cite{ICLR2024_874a4d89}, base-scaling), providing a strict test of the model's inherent extrapolation ability. As shown in Table~\ref{tab: extrapolation}, the High-Ent. Loop consistently outperforms the Vanilla baseline across all extrapolation settings, reducing PPL by 0.76, 5.28, and 8.80 points at $2\times$, $3\times$, and $4\times$ extrapolation, respectively. In comparison, the Block Loop exhibits degradation at moderate extrapolation levels ($2\times$ and $3\times$), suggesting that indiscriminate block-level recurrence introduces noise that interferes with length generalization.

\paragraph{High-Entropy Looping Enables Progressive Semantic Focusing.}
\label{sec:viz_analysis}
To qualitatively analyze the latent \textit{Inner Thinking} mechanism within high-entropy head looping, we visualize the attention dynamics of a head (Layer 2, Head 15) selected from SGT with $K=2$ loops. 
We quantify information flow using Weighted Attention Contribution $C_j = A_{N,j} \cdot \|V_j\|_2$, where $A_{N,j}$ denotes the attention weight from the final token to position $j$, and $\|V_j\|_2$ is the value vector norm. This metric captures each token's actual contribution to the output representation. 
Both Figure~\ref{fig: attention_flow2} and Figure~\ref{fig: attention_flow} reveal a consistent refinement process where the SGT iteratively focuses its semantic lens. 
Initially, Loop 0 exhibits scattered attention dispersed across both task-critical content and general syntactic elements. 
The first recurrence (Loop 1) triggers a sharp semantic filtering effect, drastically amplifying focus on pivotal tokens, such as \emph{risk} (+337\%) and \emph{individual} (+1080\%) in the medical context, and \emph{atmospheric} (+2481\%) and \emph{carbon} (+4442\%) in the scientific context, while simultaneously suppressing broad syntactic words like \emph{as}, \emph{of}, and \emph{the}.
By the second recurrence (Loop 2), the attention further converges toward answer-relevant positions to finalize the semantic anchoring. 
In this stage, key tokens like \emph{carbon} maintain high contributions while the model further eliminates residual attentional biases from the surrounding context. 
The transition from scattered attention to precise semantic anchoring demonstrates that SGT's recursive mechanism enables \textit{Inner Thinking} by iteratively correcting initial attentional biases and progressively refining semantic dependencies.

\section{Conclusion}
We establish attention entropy as a proxy for where added depth is valuable and uncover depth-dependent training dynamics. SGT then progressively loops high-entropy heads, which is theoretically shown to speed up convergence and empirically delivers stronger efficiency–performance and length generalization with interpretable inner thinking at minimal extra FLOPs.

\section*{Limitations}
The primary limitation of this work lies in the scale of the pre-training experiments. Due to computational resource constraints, our validation of SGT was conducted on models up to 1.2B parameters. While our analysis reveals consistent entropy patterns across the studied scales, confirming the scalability of the SGT framework to substantially larger foundation models remains an important direction for future work. In addition, we did not perform a fine-grained exploration of hyperparameter design, which may further improve the efficiency–performance trade-off.

\section*{Ethics Statement}
Our work studies architectural modifications for large language models using publicly available benchmarks. No personal or sensitive data is used. The proposed method does not introduce new deployment risks beyond those commonly associated with language models.

\section*{Acknowledgments}
This work is supported by the Beijing Nova Program (No. 20250484895).

\bibliography{custom}

\clearpage

\appendix

\section{Related Work}
\label{app: related}

\noindent\textbf{Depth Expansion and Recurrence.}
Increasing effective depth has been widely recognized as a key factor in enhancing Transformer reasoning capacity~\cite{petty2024impact,chen2024can,merrill2024little,saunshireasoning,roy2025dynamic,xue2025structcoh,zhang2026semanticawarelogicalreasoningsemiotic,xue2026supervisedfinetuningfailslearn,zhang2026logicalphasetransitionsunderstanding}. 
On the recurrence side, recent studies introduce recurrence to expand effective depth without increasing parameter count, showing improvements in implicit reasoning, input-length generalization, and training stability~\cite{ng2024loopneuralnetworksparameter,fan2024looped,geiping2025scaling,gong2025makes,wang2025think}.
These approaches primarily adopt block-level recursion, uniformly reapplying entire layers or layer groups to expand depth. 
While such recursive strategies achieve parameter efficiency by reusing existing weights, they introduce additional training-time computational overhead due to indiscriminate depth allocation across parameters, and leave unexplored which components within a block may benefit most from deeper recurrence.

\noindent\textbf{Paradigms of Sparsity.}
A prominent line of work improves efficiency through input-dependent sparsity, where computation is selectively allocated based on token characteristics~\cite{csordas2024switchhead,jin2024moh,wang2026fbs}. 
Mixture-of-Experts (MoE) architectures and head-level routing methods such as SwitchHead~\cite{csordas2024switchhead} and Mixture of Heads~\cite{jin2024moh} introduce input-dependent sparsity by routing tokens to distinct expert FFN parameters or selectively activating attention heads conditioned on token characteristics.
Similarly, recent inner-thinking dynamically selects which tokens receive additional computational steps~\cite{chen-etal-2025-inner}, while structured stepwise reasoning frameworks decompose complex tasks into explicit multi-stage cognitive processes~\cite{ji2026strideedstrategygroundedstepwisereasoning,zhang2025cooper,hao2026clear,yu2026knowrlboostingllmreasoning,xue2026reasonneededefficientgenerative}. 
These methods, therefore, introduce heterogeneous execution paths across samples, as computation varies with input-dependent routing decisions.
In contrast, our work introduces sparsity along a different axis. 
Rather than dynamically selecting tokens or experts, we focus on parameter-level depth allocation and training-time structural sparsity. 
Specifically, SGT progressively grows fine-grained recurrent micro-circuits during training, selectively allocating additional depth to high-value parameters within a block. 
This growth unfolds along the training timeline and concentrates computation on a subset of functional components, thereby reducing substantial pretraining overhead.
Since these approaches introduce sparsity along different dimensions, SGT is orthogonal to input-dependent routing methods and can be naturally combined with them to further improve efficiency.

\noindent\textbf{Attention Entropy.}
Attention entropy, defined as the Shannon entropy of attention weight distributions, provides a quantitative measure of how concentrated or dispersed a model's attention is across input tokens.
\citet{zhai2023stabilizing} demonstrates that fluctuations in average attention entropy correlate with training stability, while \citet{zhang2025attention} shows its influence on contextual modeling during reasoning.
However, these analyses typically operate at a coarse, model-wide granularity.
In the context of efficiency, HIES~\cite{choi2025entropy} combines gradient sensitivity with attention entropy to identify heads for pruning, reporting stable performance when removing high-entropy heads.
Nevertheless, its validation relies solely on simple sentiment classification.
For reasoning tasks characterized by complex dependencies, we posit that high-entropy heads may not be mere noise; instead, they may capture essential but widely distributed contextual signals~\cite{xue2023dual,he-etal-2025-breaking,ma2025semantic} required for global reasoning.

\begin{algorithm}[!t]
\caption{Progressive Growth Training}
\label{alg:SGT}
\small
\begin{algorithmic}[1]
\Require Model with $N_{\text{layer}}$ layers, training steps $T$, growth start $t_{\text{start}}$, window interval $\Delta t$, target number of looping layers $L$, the upper bound on the loop depth for the selected layers $K_{\max}$, the top-$L$ candidate layers per ranking forming the candidate set $\mathcal{C}$, and the top-$h$ high-entropy heads forming $\mathcal{S}_l$ for looping layer $l$.
\State Initialize $\mathcal{L} \leftarrow \emptyset$ (set of layers with active looping)
\State Initialize $K_l \leftarrow 0$ for all layers $l$ (current loop depth for layer $l$)
\State Initialize $l^* \leftarrow \text{None}$ (currently growing layer)

\For{$t = 1$ to $T$}
    \State \textcolor{gray}{// Forward pass with entropy-guided looping (Eq.~\ref{eq:sga_update})}
    \For{$l = 1$ to $N_{\text{layer}}$}
        \State $H^{(0)} \leftarrow H + \sum_{i} \operatorname{Attn}^{(i)}(H)$
        \If{$l \in \mathcal{L}$}
            \For{$k = 1$ to $K_l$}
                \State $H^{(k)} \leftarrow H^{(k-1)} + $
                \Statex $\qquad\qquad\qquad \sum_{i \in \mathcal{S}_l} \operatorname{Attn}^{(i)}(H^{(k-1)})$
            \EndFor
        \EndIf
        \State $H \leftarrow \operatorname{FFN}(H^{(K_l)})$
    \EndFor
    
    \State \textcolor{gray}{// Progressive growth (Selection Phase)}
    \If{$t \geq t_{\text{start}}$ and $(t - t_{\text{start}}) \mod \Delta t = 0$}
        \State $\mathcal{C} \leftarrow \operatorname*{arg\,top}_{L}\!\left(\{\bar{\mathcal{E}}^{(l)}\}_{l=1}^{N_{\text{layer}}}\right)$
        \If{$l^* \neq \text{None}$ and $l^* \in \mathcal{C}$ and $K_{l^*} < K_{\max}$}
            \State $K_{l^*} \leftarrow K_{l^*} + 1$
        \ElsIf{$|\mathcal{L}| < L$}
           \State $l^* \leftarrow \max \left\{ l \mid l \in \mathcal{C},\; l \notin \mathcal{L},\; l < \min(\mathcal{L}) \right\}$
            \State $\mathcal{S}_{l^*} \leftarrow \operatorname*{arg\,top}_{h}(\{\mathcal{E}^{(i)}\})$
            \State $\mathcal{L} \leftarrow \mathcal{L} \cup \{l^*\}$
            \State $K_{l^*} \leftarrow 1$
        \EndIf
    \EndIf
\EndFor
\end{algorithmic}
\end{algorithm}

\begin{table}[!t]
\centering
\setlength{\tabcolsep}{4pt}
\renewcommand{\arraystretch}{1.5}
\resizebox{0.98\columnwidth}{!}{%
\begin{tabular}{lc}
\toprule
\textbf{Benchmarks} & \textbf{Avg. Input Length} \\
\midrule
\multicolumn{2}{l}{\textbf{Mathematical Reasoning}} \\
\quad AIME~\cite{balunovic2025matharena} & 160 tokens \\
\quad MATH-500~\cite{lightman2023lets} & 89 tokens \\
\quad GSM8K~\cite{cobbe2021gsm8k} & 78 tokens \\
\midrule
\multicolumn{2}{l}{\textbf{General Knowledge QA}} \\
\quad MMLU~\cite{hendrycks2020measuring} & 117 tokens \\
\midrule
\multicolumn{2}{l}{\textbf{Code Reasoning}} \\
\quad HumanEval~\cite{chen2021evaluating} & 165 tokens \\
\midrule
\multicolumn{2}{l}{\textbf{Long-Context Retrieval}} \\
\quad RULER\_NIAH\_MK~\cite{hsieh2024ruler} & 3051 tokens \\
\quad RULER\_NIAH\_MV~\cite{hsieh2024ruler} & 3133 tokens \\
\midrule
\multicolumn{2}{l}{\textbf{Long-Context Multi-Hop QA}} \\
\quad NarrativeQA~\cite{kwiatkowski-etal-2019-natural} & 3473 tokens \\
\quad InfBenchQA~\cite{DBLP:journals/corr/abs-2402-13718} & 3760 tokens \\
\quad TriviaQA~\cite{joshi-etal-2017-triviaqa} & 3308 tokens \\
\quad HotpotQA~\cite{yang-etal-2018-hotpotqa} & 3287 tokens \\
\bottomrule
\end{tabular}%
}
\caption{Overview of the 11 diverse reasoning benchmarks, categorized by task family, used for analyzing the model's attention entropy. The tasks span various domains and sequence lengths.}
\label{tab: Overview of the 11 evaluation datasets}
\end{table}

\begin{table}[!t]
\centering
\setlength{\tabcolsep}{1.5pt}
\renewcommand{\arraystretch}{1.2}
\resizebox{0.99\columnwidth}{!}{%
\begin{tabular}{lcccc}
\toprule
\textbf{Model} & \textit{layers} &\textit{heads} &\textit{ hidden size} &\textit{intermediate size}  \\
\midrule
\textbf{Qwen3-0.6B-Base} &28 &16 &1024  &3072 \\
\midrule
\textbf{LLaMA3.2-1B-Base} &16 &32 &2048  &8192 \\
\midrule
\textbf{Qwen3-1.7B-Base} &28 &16 &2048  &6144 \\
\bottomrule
\end{tabular}%
}
\caption{Architectural hyperparameters of three representative open-source models used in our observational study of Section~\ref{Observations}. \textit{heads} denotes the number of query heads.}
\label{tab: Architectural hyperparameters}
\end{table}

\begin{figure*}[!t]
    \centering
    \includegraphics[width=\linewidth]{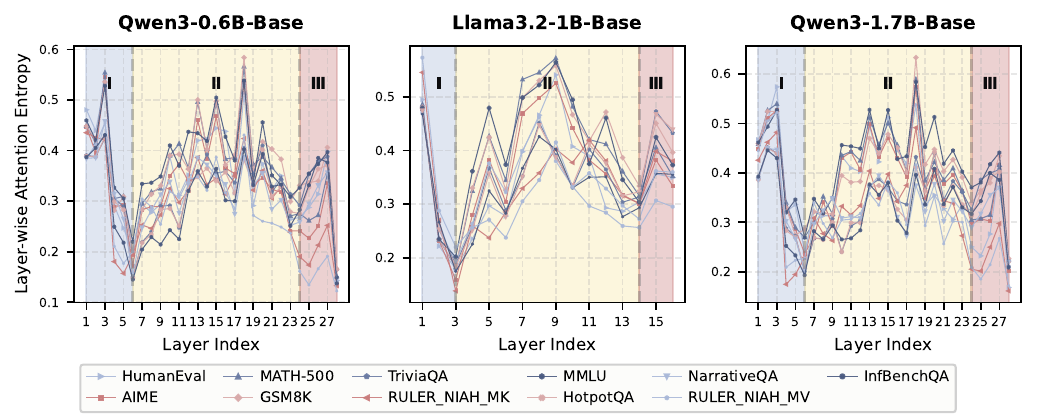} 
        \caption{Layer-wise attention entropy across 11 diverse tasks for three open-source models (details in Appendix~\ref{app: Experimental Setup} and ~\ref{app: Layer-wise Attention Entropy}).}
    \label{fig: layer_atten_entropy}
\end{figure*}

\section{Attention Entropy Computation and Rationale}
\label{app: Attention Entropy Computation and Rationale}
We utilize the attention distribution of the last token to compute the entropy metric for the following reasons.
First, the final query token is the position where the model integrates all preceding contextual information to produce the next-token distribution. Since next-token prediction is the core training objective of autoregressive LLMs, the last token of the input sequence inherently encodes all preceding information~\cite{muennighoff2022sgpt,wang2024improving,tang2024pooling,fu-etal-2025-token,DBLP:journals/corr/abs-2602-09953}, making its attention pattern the most semantically meaningful for characterizing how context is aggregated.
Second, prior work shows that in decoder-only Transformers, attention entropy tends to increase smoothly and monotonically along the sequence without abrupt fluctuations at intermediate positions~\cite{zhang2025attention}, making the entropy of the final token a stable summary of the model’s contextual uncertainty at the end of decoding. 
Third, this approach significantly reduces computational overhead. By avoiding full-sequence aggregation, we ensure the computational efficiency required for the effective implementation of the entropy metric in our proposed method.

\begin{figure*}[!t]
    \centering
    \includegraphics[width=\linewidth]{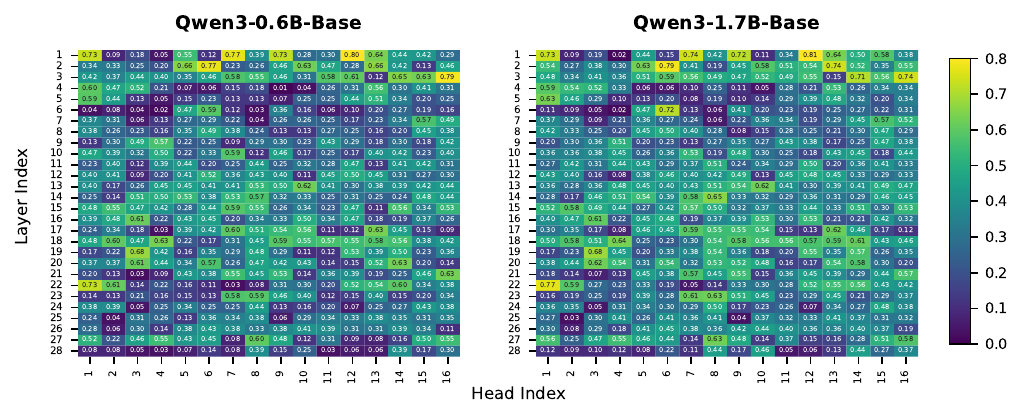}
        \caption{Layer–head attention entropy pattern of Qwen3-0.6B and Qwen3-1.7B. The x-axis denotes the head index, and the y-axis denotes the layer index (details in Appendix~\ref{app: Experimental Setup} and ~\ref{app: Layer-head Attention Entropy Pattern}).}
        \label{fig: Layer_Head_Attention_Entropy_Patterns}
\end{figure*}

\section{Observational Study Details}

\subsection{Experimental Setup}
\label{app: Experimental Setup}
We select three representative open-source models: Qwen3-0.6B-Base,  Qwen3.1-1.7B-Base~\cite{DBLP:journals/corr/abs-2505-09388}, and LLaMA3.2-1B-Base~\cite{MetaLlama3.2_2024}. 
These models span diverse parameter scales, and their architectural configurations are detailed in Table~\ref{tab: Architectural hyperparameters}.
To minimize task-specific bias, we evaluated the models across 11 diverse reasoning benchmarks. 
These tasks cover a broad range of domains and sequence lengths, including Mathematical Reasoning, General Knowledge QA, Code Reasoning, Long-Context Retrieval, and Long-Context Multi-Hop QA (see Table~\ref{tab: Overview of the 11 evaluation datasets}). 
Among these, all long-context tasks are drawn from HELMET~\cite{yen2025helmet}, a comprehensive benchmark for long-text reasoning ability. 
For each task, we randomly sample 100 instances; for datasets with many fine-grained subcategories, such as MMLU and MATH-500, we sample uniformly across all subtypes to reduce category bias.

\subsection{Intrinsic Structural Patterns of Attention Entropy}
\label{app: Intrinsic Structural Patterns of Attention Entropy}

\subsubsection{Layer-wise Attention Entropy}
\label{app: Layer-wise Attention Entropy}
We visualize the layer-wise attention entropy, as defined in Equation~\ref{eq:layer_mean_entropy}, across 11 diverse tasks on three representative open-source models.
The results (Figure~\ref{fig: layer_atten_entropy}) reveal a highly consistent and cross-task stable three-phase (I–II–III) entropy progression across all models: Phase I (shallow layers) shows a sharp entropy decrease; Phase II (middle layers) exhibits a gradual rise followed by a steady decline; Phase III (deep layers) presents a renewed entropy increase before a final decline before token generation. 
This robust and task-independent \textit{three-phase} progression demonstrates that attention entropy serves as an intrinsic architectural characteristic.
Notably, similar depth-dependent functional phases have previously been reported in interpretability studies~\cite{tenney2019bert,dai2022knowledge,fu2025cast,DBLP:conf/emnlp/ChenSZL25,huang2026semanticspaceexplorationexploitationrlvr}, but primarily through indirect diagnostics such as probing classifiers, neuron-level interventions, and spectral analyses of representation spaces. 
In contrast, layer-wise attention entropy provides a direct measurement of attention uncertainty and contextual focus, offering more task-agnostic and model-agnostic evidence for such depth-dependent functional specialization.

\subsubsection{Layer-head Attention Entropy Pattern}
\label{app: Layer-head Attention Entropy Pattern}
We visualize the layer-head attention entropy pattern, based on the mean attention entropy computed over all tasks.
As shown in Figure~\ref{fig: Layer_Head_Attention_Entropy_Patterns}, the two Qwen3 models with different parameter scales exhibit strikingly similar layer–head entropy patterns, despite substantial differences in their hidden sizes and intermediate sizes. 
Since the models share the same number of layers and attention heads (Table~\ref{tab: Architectural hyperparameters}), this consistency indicates that attention-entropy patterns are closely related to architectural configuration.
We also visualize the layer–head attention entropy patterns of LLaMA3.2-1B, as shown in Figure~\ref{fig: Appendix_Layer_Head_Attention_Entropy_Patterns}. 
Compared with the Qwen3 models, LLaMA3.2-1B exhibits a different layer–head entropy pattern.

\begin{figure}[!t]
    \centering
    \includegraphics[width=\linewidth]{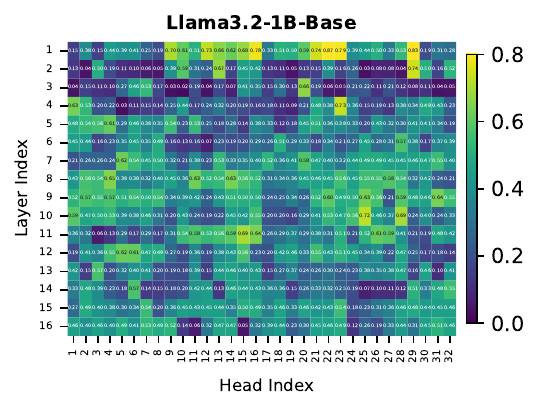}
        \caption{Layer–head attention entropy pattern of LLaMA3.2-1B-Base. The x-axis denotes the head index, and the y-axis denotes the layer index (details in Appendix~\ref{app: Experimental Setup} and ~\ref{app: Layer-head Attention Entropy Pattern}).}
        \label{fig: Appendix_Layer_Head_Attention_Entropy_Patterns}
\end{figure}

\subsection{Functional Dichotomy of High- and Low-Entropy Attention Heads}
\label{app: Functional Dichotomy of High- and Low-Entropy Attention Heads}

\subsubsection{Entropy-based Head Selection}
\label{app: Entropy-based Head Selection}
We identify high- and low-entropy heads in the Qwen3-0.6B model based on their attention entropy across 11 diverse tasks from Table~\ref{tab: Overview of the 11 evaluation datasets}. 
Heads whose entropy remains consistently above 0.5 across 11 tasks are classified as high-entropy heads, whereas those consistently below 0.06 across the same tasks are classified as low-entropy heads.
As shown in Table~\ref{tab: mask_results}, we use the notation Lx–Hy to denote the head at layer x and head index y (1-indexed).  
The identified high-entropy heads are L1-H1, L1-H7, L1-H12, and L2-H6, while the low-entropy heads are L4-H9, L25-H2, and L28-H4.
We analyze the functional roles of these two groups from both qualitative and quantitative perspectives.

\begin{table*}[!t]
\centering
\Large
\setlength{\tabcolsep}{6pt} 
\renewcommand{\arraystretch}{1.5} 
\resizebox{\textwidth}{!}{%
\begin{tabular}{l|c|c|c|cccc}
\toprule
\multirow{2}{*}{\textbf{Mask Settings}} & 
\cellcolor{lightred}\textbf{Original} & 
\textbf{High-Entropy} & 
\textbf{Low-Entropy} & 
\multicolumn{4}{c}{\textbf{Random Heads (Various Seeds)}} \\

 & \cellcolor{lightred}\textbf{Model}
 & \textbf{Heads} 
 & \textbf{Heads} 
 & \textbf{Set 1} & \textbf{Set 2} & \textbf{Set 3} & \textbf{Set 4} \\
\midrule

\textbf{Masked Heads} & 
\cellcolor{lightred}- & 
\makecell[c]{ L1-H1, L1-H7\\  L1-H12, L2-H6} & 
\makecell[c]{ L4-H9, L25-H2\\ L28-H4} & 
\makecell[c]{ L4-H1, L4-H5\\  L4-H12, L21-H8} & 
\makecell[c]{ L2-H5, L2-H6\\  L2-H8, L10-H4} & 
\makecell[c]{ L1-H9, L4-H8\\  L21-H8} & 
\makecell[c]{ L2-H5, L10-H15\\  L23-H12} \\
\midrule
\midrule

\textbf{MATH-500} & 
\cellcolor{lightred}27.0 & 
\makecell{5.0\\ \textcolor{reddrop}{\small $\downarrow 81.5\%$}} & 
\makecell{25.0\\ \textcolor{reddrop}{\small $\downarrow 7.4\%$}} & 
\makecell{20.0\\ \textcolor{reddrop}{\small $\downarrow 25.9\%$}} & 
\makecell{24.0\\ \textcolor{reddrop}{\small $\downarrow 11.1\%$}} & 
\makecell{25.0\\ \textcolor{reddrop}{\small $\downarrow 7.4\%$}} & 
\makecell{23.0\\ \textcolor{reddrop}{\small $\downarrow 14.8\%$}} \\
\midrule

\textbf{GSM8K} & 
\cellcolor{lightred}32.0 & 
\makecell{7.0\\ \textcolor{reddrop}{\small $\downarrow 78.1\%$}} & 
\makecell{29.0\\ \textcolor{reddrop}{\small $\downarrow 9.4\%$}} & 
\makecell{37.0\\ \textcolor{greenup}{\small $\uparrow 15.6\%$}} & 
\makecell{32.0\\ \textcolor{gray}{\small $\pm 0.0\%$}} & 
\makecell{32.0\\ \textcolor{gray}{\small $\pm 0.0\%$}} & 
\makecell{27.0\\ \textcolor{reddrop}{\small $\downarrow 15.6\%$}} \\
\midrule

\textbf{MMLU} & 
\cellcolor{lightred}50.5 & 
\makecell{18.0\\ \textcolor{reddrop}{\small $\downarrow 64.4\%$}} & 
\makecell{46.0\\ \textcolor{reddrop}{\small $\downarrow 8.9\%$}} & 
\makecell{45.5\\ \textcolor{reddrop}{\small $\downarrow 9.9\%$}} & 
\makecell{47.0\\ \textcolor{reddrop}{\small $\downarrow 6.9\%$}} & 
\makecell{46.0\\ \textcolor{reddrop}{\small $\downarrow 8.9\%$}} & 
\makecell{50.5\\ \textcolor{gray}{\small $\pm 0.0\%$}} \\
\midrule

\textbf{NarrativeQA} & 
\cellcolor{lightred}13.7 & 
\makecell{11.0\\ \textcolor{reddrop}{\small $\downarrow 19.7\%$}} & 
\makecell{14.2\\ \textcolor{greenup}{\small $\uparrow 3.7\%$}} & 
\makecell{13.6\\ \textcolor{reddrop}{\small $\downarrow 0.7\%$}} & 
\makecell{12.2\\ \textcolor{reddrop}{\small $\downarrow 11.0\%$}} & 
\makecell{12.6\\ \textcolor{reddrop}{\small $\downarrow 8.0\%$}} & 
\makecell{14.5\\ \textcolor{greenup}{\small $\uparrow 5.8\%$}} \\
\midrule

\textbf{NIAH\_MK} & 
\cellcolor{lightred}81.0 & 
\makecell{69.0\\ \textcolor{reddrop}{\small $\downarrow 14.8\%$}} & 
\makecell{80.0\\ \textcolor{reddrop}{\small $\downarrow 1.2\%$}} & 
\makecell{83.0\\ \textcolor{greenup}{\small $\uparrow 2.5\%$}} & 
\makecell{72.0\\ \textcolor{reddrop}{\small $\downarrow 11.1\%$}} & 
\makecell{74.0\\ \textcolor{reddrop}{\small $\downarrow 8.6\%$}} & 
\makecell{84.0\\ \textcolor{greenup}{\small $\uparrow 3.7\%$}} \\

\bottomrule
\end{tabular}%
}
\caption{Impact of head masking on Qwen3-0.6B-Base performance across reasoning and long-context tasks. The \emph{Original Model} column represents the unmasked model baseline. \emph{Mask Settings} compare the removal of High-Entropy heads against Low-Entropy and four groups of Random head baselines. The specific indices of removed heads are detailed in the \emph{Masked Heads} row, denoted as L\emph{x}-H\emph{y} (Layer \emph{x}, Head \emph{y}, 1-indexed). Cells report absolute performance scores and the relative percentage change compared to the \emph{Original Model} (details in Appendix~\ref{app: Entropy-based Head Selection} and~\ref{app: Quantitative Analysis}).}
\label{tab: mask_results}
\end{table*}

\subsubsection{Qualitative Analysis}
\label{app: Qualitative Analysis}
We visualize attention patterns on representative samples from three benchmarks: GSM8K (Figure~\ref{fig: sample_visual_gsm8k}), MATH-500 (Figure~\ref{fig: sample_visual_math500}), and MMLU (Figure~\ref{fig: sample_visual_mmlu_college_physics}), with consistent qualitative distinctions observed between high- and low-entropy heads across all datasets.
Along the horizontal axis, we highlight in red the tokens that receive the top 50\% of attention weights from the final query position.
While the top 50\% tokens of low-entropy heads remain confined to a narrow local neighborhood near the diagonal, high-entropy heads selectively attend to tokens that are directly relevant for reasoning, indicating their role in capturing globally meaningful dependencies.

\subsubsection{Quantitative Analysis}
\label{app: Quantitative Analysis}
We evaluate how masking different groups of attention heads affects model performance on the Qwen3-0.6B model across five tasks from Table~\ref{tab: Overview of the 11 evaluation datasets}, with all outputs assessed using GPT-4o-mini~\cite{openai2024gpt4omini} as a unified judge model.
To control for randomness, we construct four random-head baselines sampled from the remaining heads after excluding both high- and low-entropy heads: two groups match the high-entropy set in terms of layer and head count, and the other two match the low-entropy set.
As shown in Table~\ref{tab: mask_results}, masking high-entropy heads results in the greatest performance degradation relative to the original model across almost all tasks, with performance collapsing much more severely on reasoning datasets than on long-context retrieval tasks.

\begin{figure}[!t]
    \centering
    \includegraphics[width=\linewidth]{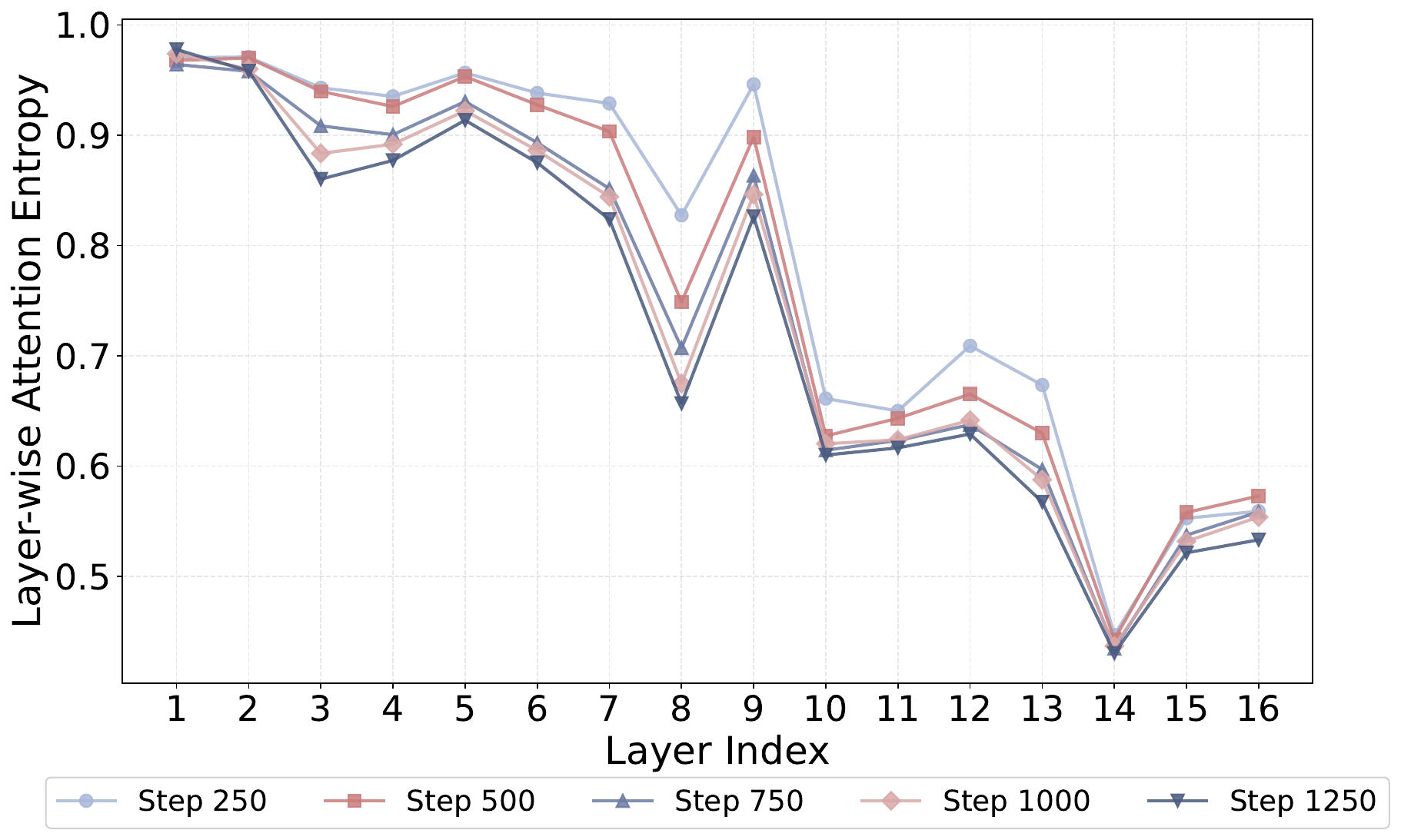} 
    \caption{Layer-wise Attention Entropy at Different Training Steps. The horizontal axis represents the model's layer index. Each line shows the entropy distribution recorded at 250-step intervals during the early stages of training (details in Appendix ~\ref{app: Training Dynamics of Attention Entropy}).}
    \label{fig: training/layer_mean_attention_entropy}
\end{figure}

\begin{table}[!t]
\centering
\setlength{\tabcolsep}{1.5pt}
\renewcommand{\arraystretch}{1.2}
\resizebox{0.99\columnwidth}{!}{%
\begin{tabular}{lccc}
\toprule
\textbf{Model Setting}  &\textbf{OLMo-275M} & \textbf{OLMo-573M} &\textbf{OLMo-1.2B}\\
\midrule
\textit{hidden size} &768  & 1024 &1536 \\
\textit{intermediate size} &6144  & 8192 &12288  \\
\textit{attention heads} &12  & 16  &16 \\
\textit{layers}  &12  &16  &16 \\
\bottomrule
\end{tabular}%
}
\caption{ Detailed architecture configurations for the three model scales (275M, 573M, and 1.2B) pre-trained from scratch in our main experiments. We pre-train models from scratch at three scales using the OLMo framework. All models adopt the LLaMA architecture with Multi-Head Attention (MHA).}
\label{tab: model_detailed_configuration}
\end{table}

\subsection{Training Dynamics of Attention Entropy}
\label{app: Training Dynamics of Attention Entropy}
We pre-trained a 575M parameter model from scratch based on the LLaMA architecture with Multi-Head Attention (MHA). 
The model comprises 16 layers with 16 attention heads per layer; detailed configurations are provided in Table~\ref{tab: model_detailed_configuration}. 
The pre-training was conducted on a corpus of 20B tokens for a total of 5,035 steps.
To investigate the dynamic properties of the model, we tracked three key metrics throughout the training process: the layer-wise average attention entropy (Equation~\ref{eq:layer_mean_entropy}), the individual attention entropy of all heads within each layer (Equation~\ref{eq: normalized-entropy}), and the intra-layer variance of head attention entropy.

Regarding the layer-level dynamics, we observe that during the early training phase, attention entropy decreases universally, while the relative ordering of layers remains largely stable with only minor fluctuations (Figure~\ref{fig: training/layer_mean_attention_entropy}). 

Regarding the intra-layer variance, we observe distinct behaviors across depths: deep layers exhibit a rapid surge followed by a gradual decline, while shallow layers show a steady, continuous increase.
Interpreting attention entropy as a proxy for head functionality (Observation~\ref{ob:functional_dichotomy}), we consider the intra-layer entropy variance as a measure of the degree of functional differentiation among heads. A higher variance indicates more significant divergence in head roles.
Based on this interpretation, the observed patterns reveal that deep layers undergo rapid functional differentiation during the early stages of training, whereas shallow layers differentiate through a slower, more stable process. Notably, we observe a clear depth-dependent trend: the deeper the layer, the earlier it achieves functional differentiation.
In particular, Layer 1 presents a notable exception: it maintains near-zero entropy variance throughout the entire training process, with the attention entropy of all heads consistently approaching 1 (Figure~\ref{fig: both}).
Given this lack of functional differentiation, we exclude Layer 1 from the candidate pool for our dynamic layer selection strategy during the progressive growth training phase.

\section{Experimental Details and Supplementary Analysis}
\label{app: Experimental Details and Supplementary Analysis}

\subsection{Details of Main Experiments}
\label{app: Details of Main Experiments}
The \emph{Block Loop} baseline follows a widely adopted block-level depth reuse design in prior looped-depth studies~\cite{DBLP:journals/corr/abs-2510-25741,chen-etal-2025-inner,DBLP:journals/corr/abs-2507-10524}. 
Its configuration is deliberately designed to ensure strong and competitive performance rather than serve as a weak reference point. 
In particular, the selection of looped layers is guided by attention entropy rather than random assignment, following the same principle used in our proposed method. 
This ensures that recurrence is applied to the most information-dense layers, maintaining methodological consistency and enabling a fair comparison between block-level and head-level depth allocation strategies.

Our primary objective is to validate the feasibility of the \emph{training-time sparsity paradigm}. 
We therefore do not attempt to reproduce the full complexity of existing token-routing loop architectures~\cite{chen-etal-2025-inner}, as discussed in Related Work, such methods explore a data-dependent execution paradigm that is orthogonal to our structural design focus. 
For the selective head-level looping in SGT, we set the number of high-entropy heads to $h=2$, which represents an extremely small fraction of the total attention heads within a layer. 
This deliberately sparse configuration allows us to evaluate whether substantial reductions in training FLOPs, achieved by looping only a minimal subset of high-entropy heads, can match the performance of full block-level looping. 
All results are reported from single runs, which is standard practice in large-scale pretraining due to computational constraints.

\subsection{Details of Ablation Experiments}
\label{app: Details of Ablation Experiments}
To isolate individual variables and minimize confounding effects in our ablation and analysis experiments, we adopted a single-layer configuration. This design enables a clean assessment of the computational contributions of different block components across varying loop depths. For the high-entropy head loop, we used a two-stage selection procedure: in the first window, we identified ten high-entropy heads; in the second window, we selected two heads from this set. The resulting structure was then frozen for the remainder of training. The low-entropy head loop follows the same protocol, with the only difference being that low-entropy heads are selected.

\subsection{Analysis of the Growth Schedule Parameters}
\label{app: Analysis of the Growth Schedule Parameters}

\begin{table}[!t]
\centering
\setlength{\tabcolsep}{4pt}
\renewcommand{\arraystretch}{1}
\resizebox{0.93\columnwidth}{!}{%
\begin{tabular}{lccc}
\toprule
\textbf{Model} & \textbf{ARC-E} & \textbf{WG} & \textbf{SIQA} \\
\midrule
Vanilla & 44.56 & 49.57 & 41.30 \\
Block Loop & 44.74 & 50.36 & 41.10 \\
SGT ($t_{\text{start}}=\Delta t=500$) & 45.26 & 50.91 & 41.56 \\
SGT ($t_{\text{start}}=\Delta t=250$) & 45.09 & 51.46 & 41.40 \\
SGT ($t_{\text{start}}=\Delta t=100$) & 45.44 & 51.62 & 41.97 \\
\bottomrule
\end{tabular}%
}
\caption{Analysis of growth schedule parameters ($t_{\text{start}}$ and $\Delta t$) for the 275M model.}
\label{tab: schedule_ablation}
\end{table}

In our main experiments, we configure the progressive growth schedule with 
$t_{\text{start}} = 250$ and $\Delta t = 250$. 
To further examine the sensitivity of SGT to the growth schedule, we conduct additional experiments by varying the temporal parameters $t_{\text{start}}$ and $\Delta t$. 
As shown in Table~\ref{tab: schedule_ablation}, SGT consistently achieves stable and robust performance improvements across different configurations. 

\subsection{Analysis of Inference Throughput}
\label{app: Analysis of Inference Throughput}

\begin{table}[!t]
\centering
\setlength{\tabcolsep}{2pt}
\renewcommand{\arraystretch}{1.1}
\resizebox{0.99\columnwidth}{!}{%
\begin{tabular}{c| l c c}
\toprule
\textbf{Batch} & \textbf{Model} & \textbf{Prefill TPS} & \textbf{Decode TPS} \\
\midrule

\multirow{6}{*}{1}
& Vanilla (275M) & 14381.0 & 127.9 \\
& SGT (275M) & 13805.7 & 122.7 \\
& Block Loop (275M) & 11504.8 & 102.3 \\
& Vanilla (573M) & 10969.2 & 100.5 \\
& SGT (573M) & 10718.0 & 98.2 \\
& Block Loop (573M) & 9237.2 & 84.6 \\
\midrule

\multirow{6}{*}{2}
& Vanilla (275M) & 27646.2 & 245.2 \\
& SGT (275M) & 26540.4 & 235.3 \\
& Block Loop (275M) & 22117.0 & 196.1 \\
& Vanilla (573M) & 3997.6 & 194.9 \\
& SGT (573M) & 3906.1 & 190.4 \\
& Block Loop (573M) & 3366.4 & 164.1 \\
\midrule

\multirow{6}{*}{4}
& Vanilla (275M) & 98181.9 & 928.5 \\
& SGT (275M) & 94254.6 & 891.4 \\
& Block Loop (275M) & 78545.5 & 742.8 \\
& Vanilla (573M) & 41193.5 & 384.6 \\
& SGT (573M) & 40250.1 & 375.8 \\
& Block Loop (573M) & 34689.2 & 323.9 \\
\bottomrule
\end{tabular}%
}
\caption{Inference throughput comparison (tokens/s) under different batch sizes.
Prefill throughput measures tokens processed during context encoding (KV cache construction), while decode throughput measures autoregressive token generation speed.}
\label{tab: throughput}
\end{table}

During inference, SGT assigns additional recurrence only to a small subset of attention heads identified during training as high-entropy components. 
Unlike data-dependent dynamic routing architectures that perform token-level or sample-level conditional execution, SGT maintains a static inference graph in which all samples within a batch follow the same computation path. 
This design preserves dense and regular execution, enabling full GPU parallelism without introducing branch divergence or dynamic control overhead.

We evaluate inference throughput on a single NVIDIA A800 GPU using BF16 precision, with a maximum prompt length of 128 and a generation length of 128. 
To isolate architectural effects, all methods are implemented natively without specialized operator-level optimizations. 
As shown in Table~\ref{tab: throughput}, although parameter reuse inevitably introduces moderate latency compared to the vanilla Transformer, SGT consistently achieves higher throughput than the Block Loop across all batch sizes and model scales. 

\section{Theoretical Analysis of High-Entropy Recursive Convergence}
\label{app: Theoretical Analysis}

In this section, we provide a theoretical justification for why high-entropy heads exhibit superior convergence properties compared to low-entropy heads in recursive attention mechanisms. 
Our analysis connects the entropy of attention matrices to the spectral properties of the recursive dynamics, offering theoretical insight into the experimental results.
We retain the notation from Section~\ref{Observations} and Section~\ref{Method} for consistency, while defining new variables as they appear.

\begin{figure*}[!t]
    \centering
    \includegraphics[width=\linewidth]{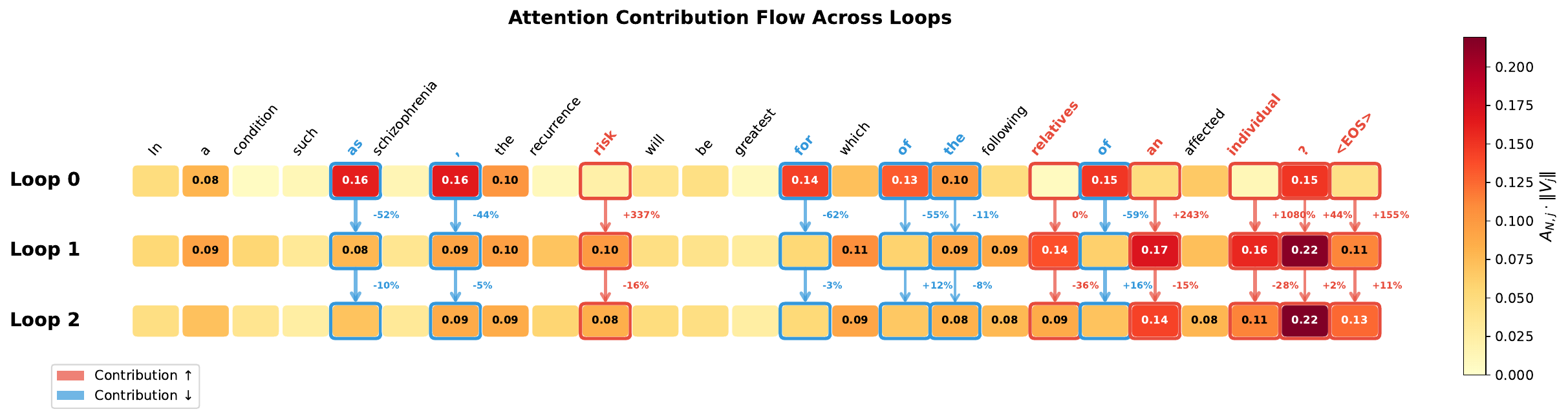}
        \caption{Attention contribution flow across loops in a high-entropy head (Layer 2, Head 15). Each cell shows the weighted attention contribution $C_j = A_{N,j} \cdot \|V_j\|_2$ from the final token to position $j$, with color intensity indicating magnitude. Red and blue borders mark tokens with the largest contribution increase and decrease from Loop 0 to Loop 2, respectively. Arrows with percentage labels denote inter-loop changes. The visualization shows attention progressively shifting from syntactic elements toward task-critical and answer-relevant tokens.}
        \label{fig: attention_flow}
\end{figure*}

\subsection{Convergence Objective and Equilibrium Analysis}
\label{sec:equilibrium_objective}

While standard Transformers stack distinct layers to refine token representations progressively, our recursive approach performs this refinement within a shared parameter space. 
To analyze the efficiency of this refinement, we view the recursive dynamics through the lens of implicit depth models.

\paragraph{Quasi-Stationary State.}
Following the framework of \cite{bai2019deep,fung2021jfbjacobianfreebackpropagationimplicit}, we analyze convergence toward a \emph{quasi-stationary} state $H^*$ where the recursive update becomes sufficiently small:
\begin{equation}
\|\mathcal{F}(H^*)\|_F \leq \delta,
\end{equation}
for some small tolerance $\delta > 0$. 
This formulation allows us to analyze finite-depth recursive dynamics without requiring exact convergence.

\paragraph{Convergence Error Analysis.}
We define the error at iteration $k$ as $\epsilon^{(k)} = H^{(k)} - H^*$. Our theoretical goal is to establish that after a finite number of recursive steps $K$, high-entropy head selection achieves a smaller error bound $\|\epsilon^{(K)}\|_F$ compared to low-entropy selections, thereby more efficiently approximating the quasi-equilibrium.

\subsection{Spectral Analysis of Convergence}
The key insight underlying our analysis is that global context integration is fundamentally a \emph{token-mixing} problem. 
The challenge in recursive refinement is achieving consensus across spatially distributed tokens—this information diffusion process is the primary bottleneck for convergence.
Recent spectral analyses of attention mechanisms \cite{pmlr-v267-nait-saada25a} 
have demonstrated that the spectral properties of the attention matrix $A$ 
(particularly the distribution of its eigenvalues) dominantly govern signal 
propagation dynamics through transformer layers.
The connectivity structure encoded in $A$ determines how quickly information flows between tokens, largely independent of the specific feature-space transformations applied by $W_{V}$ and $W_{O}$.

\paragraph{Spectral Dominance Approximation.}
Building on this spectral perspective, we model the aggregate effect of selected attention heads through their token-mixing properties. The aggregate recursive update can be approximated by factoring out an effective scaling factor $\gamma$ that absorbs the contributions of the projection matrices:
\begin{equation}
\mathcal{F}(H^{(k)}) = \sum_{i \in \mathcal{S}} \operatorname{Attn}^{(i)}(H^{(k)}) \approx \gamma \bar{A} H^{(k)},
\label{eq: Spectral Dominance Approximation}
\end{equation}
where $\bar{A}$ is the effective average attention matrix defined as:
\begin{equation}
\bar{A} = \frac{1}{|\mathcal{S}|} \sum_{i \in \mathcal{S}} A^{(i)}.
\end{equation}
This factorization separates the token-mixing dynamics (captured by $\bar{A}$) from feature-space transformations (absorbed into $\gamma$), allowing us to focus our spectral analysis on the attention matrix, which governs the fundamental convergence rate.

\paragraph{Error Dynamics.}
We analyze how the error $\epsilon^{(k)}$ evolves over iterations. 
Starting from the recursive update $H^{(k+1)} = H^{(k)} + \mathcal{F}(H^{(k)})$ and the quasi-equilibrium $H^* \approx H^* + \mathcal{F}(H^*)$, we obtain:
\begin{equation}
\begin{aligned}
\epsilon^{(k+1)} &= H^{(k+1)} - H^* \\
&\approx \epsilon^{(k)} + \mathcal{F}(H^{(k)}) - \mathcal{F}(H^*).
\end{aligned}
\end{equation}
Substituting the spectral approximation from Equation~\ref{eq: Spectral Dominance Approximation}:
\begin{equation}
\begin{aligned}
\epsilon^{(k+1)} &\approx \epsilon^{(k)} + \gamma \bar{A} H^{(k)} - \gamma \bar{A} H^* \\
&= \epsilon^{(k)} + \gamma \bar{A} (H^{(k)} - H^*) \\
&= (I + \gamma \bar{A}) \epsilon^{(k)}.
\end{aligned}
\end{equation}
To model the effective mixing strength (accounting for LayerNorm stabilization and residual connection balancing), we reparameterize with a normalized coefficient $\beta = \gamma/(1+\gamma) \in (0, 1]$, yielding:
\begin{equation}
\begin{aligned}
\epsilon^{(k+1)} &\approx ((1-\beta)I + \beta\bar{A}) \epsilon^{(k)} \\
&= M \epsilon^{(k)},
\end{aligned}
\end{equation}
where $M = (1-\beta)I + \beta\bar{A}$ is the effective mixing matrix.
Since $\bar{A}$ is row-stochastic (each row sums to 1), $M$ is also row-stochastic

\paragraph{Spectral Characterization of Convergence Rate.}
The convergence of the error $\epsilon^{(k)} = H^{(k)} - H^*$ is governed by the spectral properties of the mixing matrix $M$. For the error dynamics $\epsilon^{(k+1)} = M\epsilon^{(k)}$, we can bound the error norm by analyzing the eigenvalue structure of $M$.
Since $M = (1-\beta)I + \beta\bar{A}$, the eigenvalues of $M$ are related to those of $\bar{A}$ by:
\begin{equation}
\lambda_i(M) = (1-\beta) + \beta\lambda_i(\bar{A}), \quad i = 1, \ldots, N.
\end{equation}
Due to the causal mask in decoder-only architectures, $\bar{A}$ is strictly lower triangular, and thus its eigenvalues equal its diagonal entries: $\lambda_i(\bar{A}) = \bar{A}_{ii}$. 
For a fixed number of recursive steps $K$, the error after $K$ iterations can be bounded using the spectral norm:
\begin{equation}
\begin{aligned}
\|\epsilon^{(K)}\|_F &\leq \|M^K\|_F \|\epsilon^{(0)}\|_F \\
&\lesssim \left(\sum_{i=1}^{N} \lambda_i(M)^2\right)^{K/2} \|\epsilon^{(0)}\|_F \\
&= \left(\sum_{i=1}^{N} [(1-\beta) + \beta\bar{A}_{ii}]^2\right)^{K/2} \|\epsilon^{(0)}\|_F.
\end{aligned}
\end{equation}
Expanding the squared terms and noting that the $(1-\beta)^2$ contributes uniformly across all eigenvalues, the variation in convergence behavior is determined by the attention-dependent terms. 
The dominant factor for comparing different head selections is the trace:
\begin{equation}
\left(\sum_{i=1}^{N} \bar{A}_{ii}\right)^{K} = (\operatorname{Tr}(\bar{A}))^K,
\end{equation}
where $\operatorname{Tr}(\bar{A}) = \sum_{i=1}^{N} \bar{A}_{ii}$ is the trace of the attention matrix. 
This reveals that minimizing the trace of the attention matrix accelerates convergence. 
Our theoretical goal is to show that high-entropy head selection achieves this.

\subsection{Connecting Entropy to Trace}

We now establish a quantitative connection between attention entropy and the trace $\operatorname{Tr}(\bar{A})$.

\begin{lemma}[Entropy-Based Bound on Diagonal Elements]
\label{lemma:entropy_diagonal}
For the $i$-th row of a causal attention matrix $\bar{A}$ with entropy $\mathcal{E}_i = -\sum_{j=1}^{i} \bar{A}_{ij} \log \bar{A}_{ij}$, the diagonal element satisfies:
\begin{equation}
\bar{A}_{ii} \leq e^{-\mathcal{E}_i/i}.
\label{eq: entropy_bound}
\end{equation}
\end{lemma}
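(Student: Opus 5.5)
The plan is to reduce the lemma to a one-dimensional extremal problem in the diagonal entry $q := \bar{A}_{ii}$. Take logarithms: the claim $\bar{A}_{ii} \le e^{-\mathcal{E}_i/i}$ is equivalent to
\begin{equation*}
\mathcal{E}_i \le -\,i \log q .
\end{equation*}
The row $(\bar{A}_{i1},\dots,\bar{A}_{ii})$ is a probability vector, since $\bar{A}$ is an average of row-stochastic causal matrices. So I will fix $q$ and bound $\mathcal{E}_i$ by its largest possible value over all rows with that diagonal entry.

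\textbf{Step 1.} Use the grouping (chain) rule for entropy, splitting off the diagonal entry:
\begin{equation*}
\mathcal{E}_i = h(q) + (1-q)\,\mathcal{E}\big(\text{off-diagonal mass renormalised}\big).
\end{equation*}
Here $h(q) = -q\log q - (1-q)\log(1-q)$ is the binary entropy. The renormalised off-diagonal distribution lives on $i-1$ points, so its entropy is at most $\log(i-1)$, with equality for the uniform distribution. This gives the sharp Fano-type bound
\begin{equation*}
\mathcal{E}_i \le h(q) + (1-q)\log(i-1),
\end{equation*}
attained when the off-diagonal mass is spread uniformly.

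\textbf{Step 2.} It would then suffice to show $\phi(q) := -i\log q - h(q) - (1-q)\log(i-1) \ge 0$ on $(0,1]$. I would analyse $\phi$ by calculus: check the endpoints, then locate the sign changes of $\phi'$.

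\textbf{Main obstacle.} I expect Step 2 to fail, and in fact the lemma as stated appears false near $q \to 1$. As $q\to1$ the term $-(1-q)\log(1-q)$ inside $h(q)$ dominates $-i\log q \approx i(1-q)$. A concrete check with $i=2$ and row $(0.01,\,0.99)$: the entropy is $\mathcal{E}_2 \approx 0.056$ nats, while $-2\log 0.99 \approx 0.020$, so $0.99 > e^{-0.056/2}\approx 0.972$ and the inequality is violated.

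So my plan is to carry out Step 2 only on the range where it holds. I would first determine the threshold $q^*(i)$ below which $\phi \ge 0$; for $i=2$, for instance, $q=1/2$ works, since $\log 2 \le 2\log 2$. I would then state the lemma under the extra hypothesis $\bar{A}_{ii} \le q^*(i)$. Alternatively, I would replace the bound by the exact implicit bound $\mathcal{E}_i \le h(\bar{A}_{ii}) + (1-\bar{A}_{ii})\log(i-1)$ from Step 1. That bound is rigorous and still yields the qualitative conclusion used downstream: high row entropy forces a small diagonal entry, hence a small $\operatorname{Tr}(\bar{A})$.
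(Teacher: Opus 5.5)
Your conclusion is right: the lemma is false as stated, and your counterexample is valid. The row $(0.01,0.99)$ is strictly positive, so a softmax head can realize it (and so can an average of identical heads). It has $\mathcal{E}_2\approx 0.0560$ nats and $e^{-\mathcal{E}_2/2}\approx 0.9724<0.99=\bar{A}_{22}$.

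The failure is not confined to $i=2$. Let $\epsilon=1-\bar{A}_{ii}$ and put all off-diagonal mass on one position. Then $i\log\bar{A}_{ii}+\mathcal{E}_i=\epsilon\log\frac{1}{\epsilon}-(i-1+\epsilon)\log\frac{1}{1-\epsilon}$, which is positive once $\epsilon$ is small. So the inequality fails for every $i\ge 2$.

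Your Step 1 is the same reduction the paper uses, seen from the dual side. The paper's sketch maximizes $\bar{A}_{ii}$ at fixed $\mathcal{E}_i$ by Lagrange multipliers. It arrives at the same extremal row: mass $q$ on the diagonal, with the remaining $1-q$ spread uniformly over the other $i-1$ positions. You instead maximize $\mathcal{E}_i$ at fixed $\bar{A}_{ii}$ via the grouping rule.

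The paper then simply asserts that ``standard variational calculus'' gives $e^{-\mathcal{E}_i/i}$. Your Step 2 is exactly the omitted computation. It shows that the extremal curve $h(q)+(1-q)\log(i-1)$ exceeds $-i\log q$ near $q=1$, because of the $(1-q)\log\frac{1}{1-q}$ term. So the gap lies in the final step of the paper's sketch, not in your argument.

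Both of your repairs are sound, and you can make repair (a) explicit with your own $\phi$. For $i\ge 2$, write $\epsilon=1-q$ and use $\log\frac{1}{1-\epsilon}\ge\epsilon$:
\[
\phi(q)=(i-1+\epsilon)\log\tfrac{1}{1-\epsilon}-\epsilon\log\tfrac{1}{\epsilon}-\epsilon\log(i-1)\;\ge\;\epsilon\Bigl[\,i-1+\epsilon-\log\tfrac{1}{\epsilon}-\log(i-1)\Bigr].
\]
The bracket is at least $\epsilon>0$ whenever $\epsilon\ge(i-1)e^{-(i-1)}$. Hence $\bar{A}_{ii}\le e^{-\mathcal{E}_i/i}$ holds for every row with $\bar{A}_{ii}\le 1-(i-1)e^{-(i-1)}$. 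For $i=2$ this reads $\bar{A}_{22}\le 1-e^{-1}$, consistent with your check at $q=1/2$. All violations therefore live in an exponentially thin window near $\bar{A}_{ii}=1$.

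Repair (b), the Fano-type bound $\mathcal{E}_i\le h(\bar{A}_{ii})+(1-\bar{A}_{ii})\log(i-1)$, is exact. It is the honest form of ``high entropy forces a small diagonal entry.''
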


\begin{proof}[Proof Sketch]
Consider the $i$-th row as a probability distribution over positions $\{1, \ldots, i\}$. We seek to maximize $\bar{A}_{ii}$ subject to the constraints $\sum_{j=1}^{i} \bar{A}_{ij} = 1$ and $-\sum_{j=1}^{i} \bar{A}_{ij} \log \bar{A}_{ij} = \mathcal{E}_i$.
Applying the method of Lagrange multipliers, the optimal configuration that maximizes the diagonal weight while maintaining fixed entropy concentrates mass on the diagonal and distributes the remaining mass uniformly over other positions. Standard variational calculus yields that the maximum achievable diagonal weight decays exponentially with entropy.
\renewcommand{\qedsymbol}{}
\end{proof}

\paragraph{Bounding the Trace.}
Applying Lemma~\ref{lemma:entropy_diagonal} to each row:
\begin{equation}
\operatorname{Tr}(\bar{A}) = \sum_{i=1}^{N} \bar{A}_{ii} \leq \sum_{i=1}^{N} e^{-\mathcal{E}_i/i}.
\end{equation}
By first-order Taylor expansion around the mean attention entropy $\bar{\mathcal{E}} = \frac{1}{N}\sum_{i=1}^{N}\mathcal{E}_i$, the sum can be approximated as:
\begin{equation}
\sum_{i=1}^{N} e^{-\mathcal{E}_i/i} \approx \sum_{i=1}^{N} e^{-\bar{\mathcal{E}}/N} = N \cdot e^{-\bar{\mathcal{E}}/N}.
\end{equation}
Therefore, we obtain:
\begin{equation}
\operatorname{Tr}(\bar{A}) \lesssim N \cdot e^{-\bar{\mathcal{E}}/N},
\end{equation}
where the attention entropy $\bar{\mathcal{E}}$ directly controls the convergence rate: higher entropy yields a tighter bound on $\operatorname{Tr}(\bar{A})$, leading to accelerated decay of the error $\|\epsilon^{(K)}\|_F$ over $K$ recursive steps.
Our theoretical analysis establishes that high entropy of the attention matrix accelerated convergence over finite recursive steps. 
This provides the theoretical foundation for entropy-based head selection strategies.

\begin{figure*}[htbp]
    \centering
    \includegraphics[
        width=0.98\linewidth,
        height=0.98\textheight,
        keepaspectratio
    ]{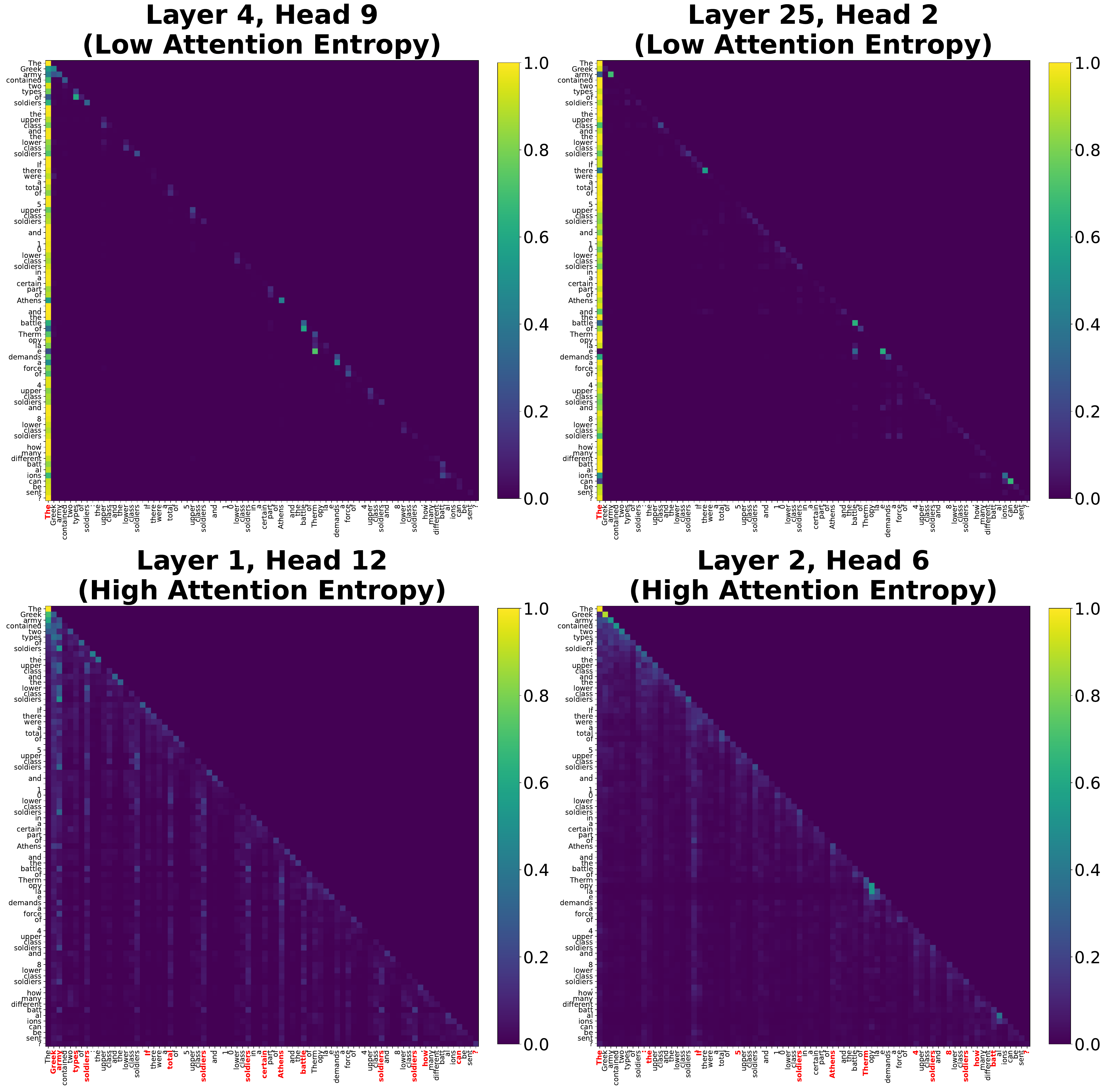}
    \caption{Visualization of attention heatmap in low- and high-entropy heads from Qwen3-0.6B, with a sample from MATH-500. Along the horizontal axis, tokens highlighted in red denote the subset that receives the top 50\% of the attention from the final query position (details in Appendix~\ref{app: Entropy-based Head Selection} and ~\ref{app: Qualitative Analysis}).}
    \label{fig: sample_visual_math500}
\end{figure*}

\begin{figure*}[htbp]
    \centering
    \includegraphics[
        width=0.98\linewidth,
        height=0.98\textheight,
        keepaspectratio
    ]{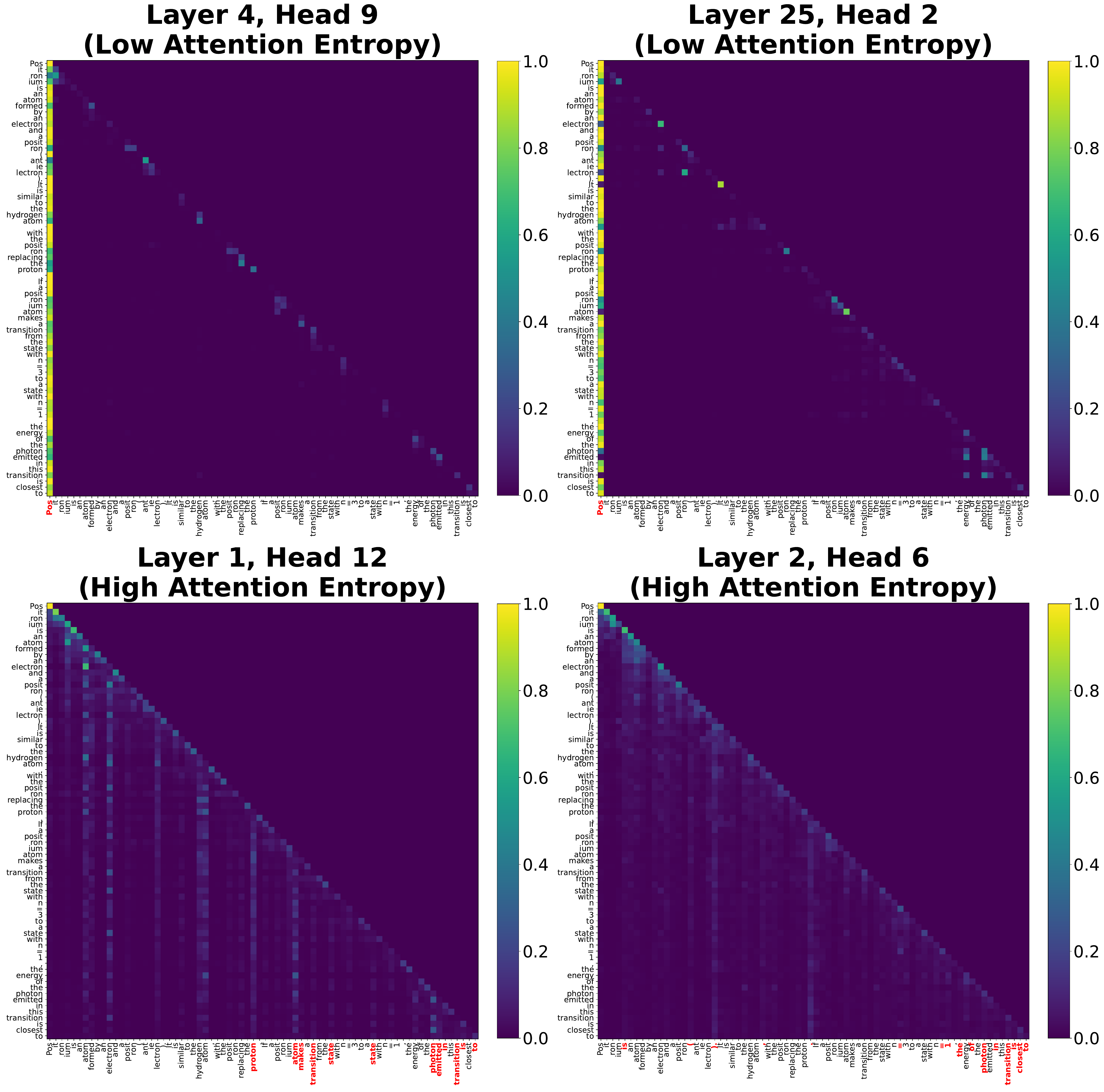}
    \caption{Visualization of attention heatmap in low- and high-entropy heads from Qwen3-0.6B, with a sample from MMLU. Along the horizontal axis, tokens highlighted in red denote the subset that receives the top 50\% of the attention from the final query position (details in Appendix~\ref{app: Entropy-based Head Selection} and ~\ref{app: Qualitative Analysis}).}
    \label{fig: sample_visual_mmlu_college_physics}
\end{figure*}

\begin{figure*}[htbp]
    \centering
    \begin{subfigure}[b]{0.87\textwidth}
        \centering
        \includegraphics[width=\linewidth]{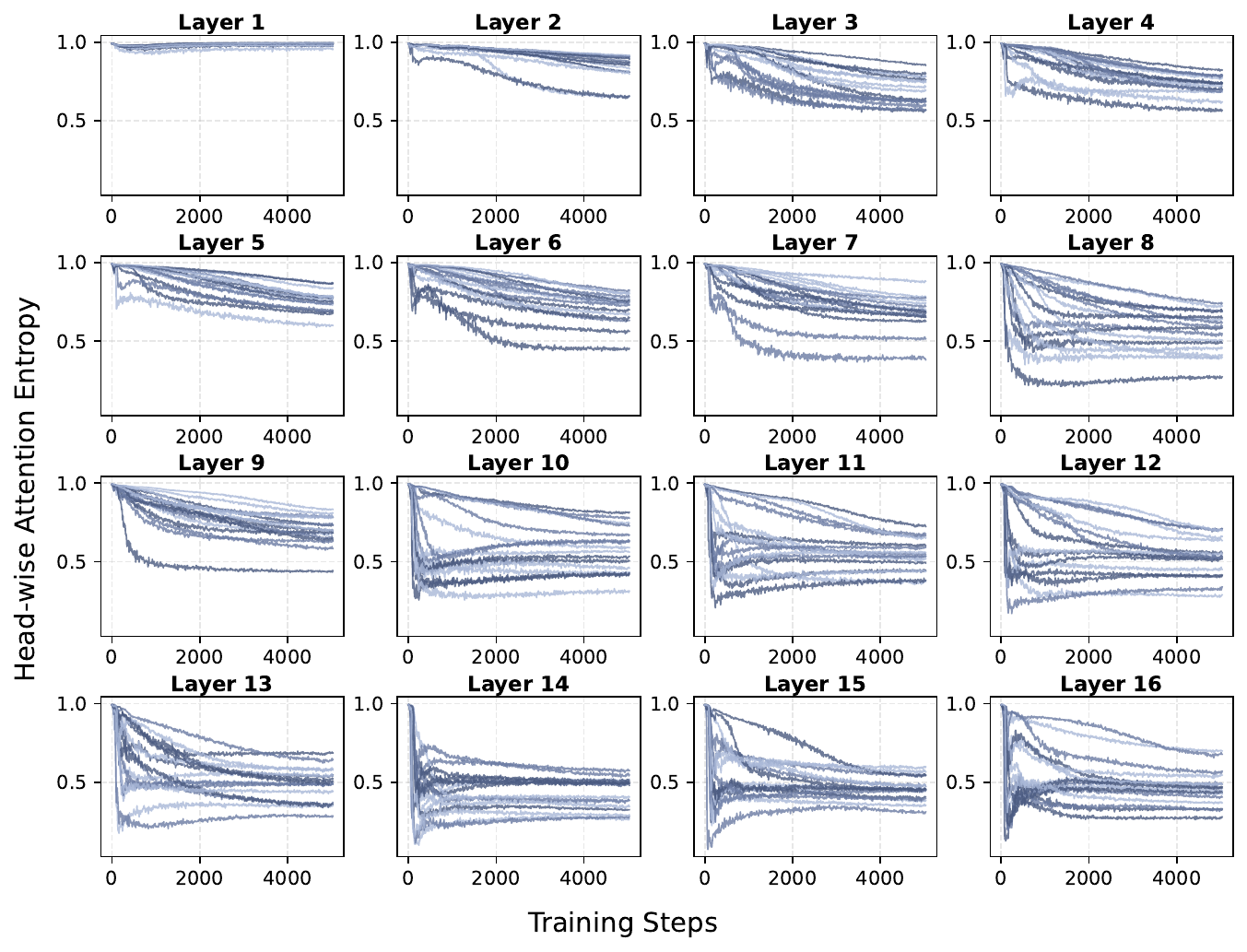}
        \caption{}
        \label{fig: training/head_entropy}
    \end{subfigure}
    \vspace{0.5em}
    \begin{subfigure}[b]{0.87\textwidth}
        \centering
        \includegraphics[width=\linewidth]{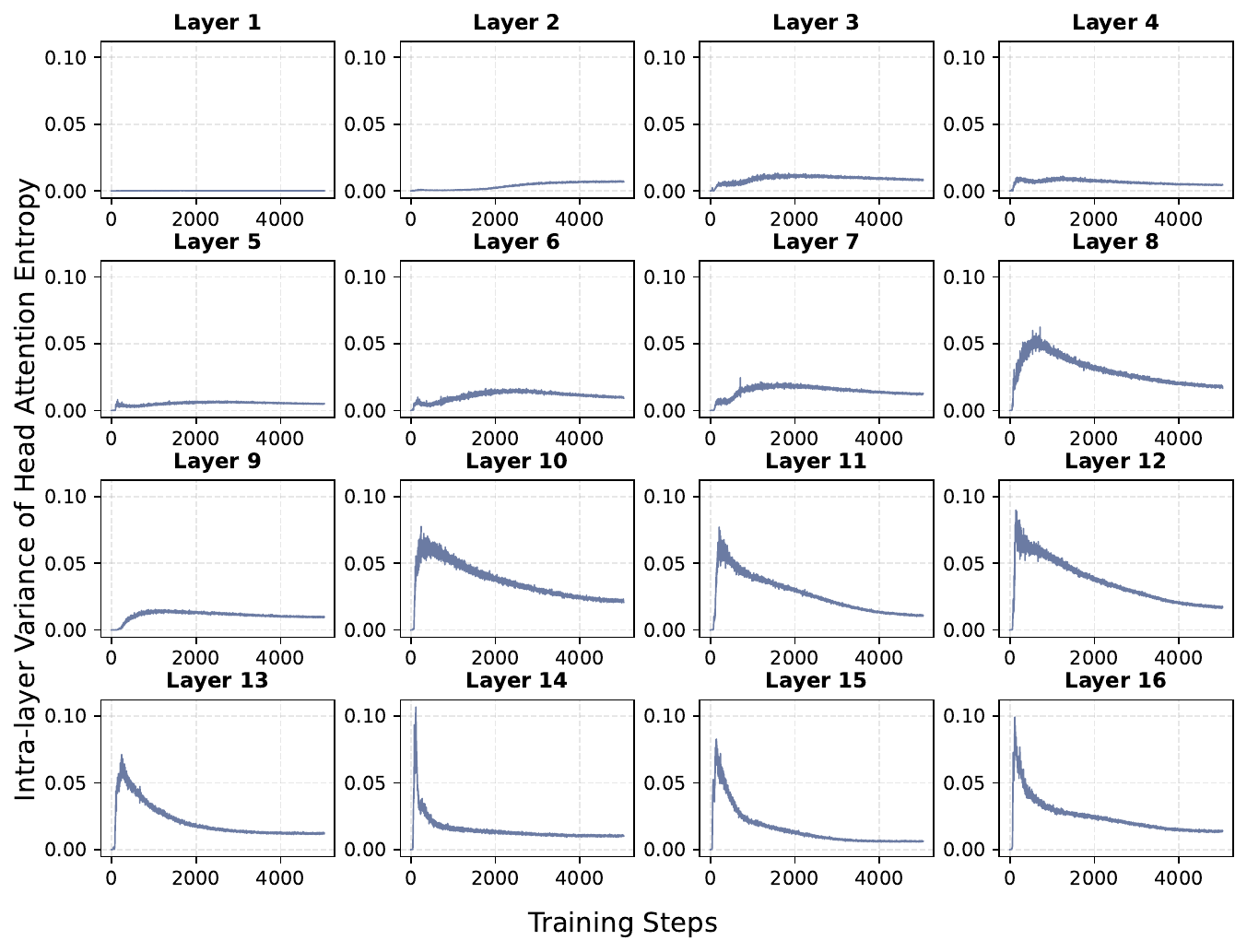}
        \caption{}
        \label{fig: training/head_variance_unified}
    \end{subfigure}

    \caption{(a) \textbf{Head-wise attention entropy trajectories across all layers during training}. Within each layer-specific subplot, individual curves correspond to distinct attention heads. (b) \textbf{Intra-layer variance of attention entropy across heads over the course of training (details in Appendix~\ref{app: Training Dynamics of Attention Entropy})}.}
    \label{fig: both}
\end{figure*}

\begin{figure*}[!tb]
    \centering
    \includegraphics[width=\linewidth]{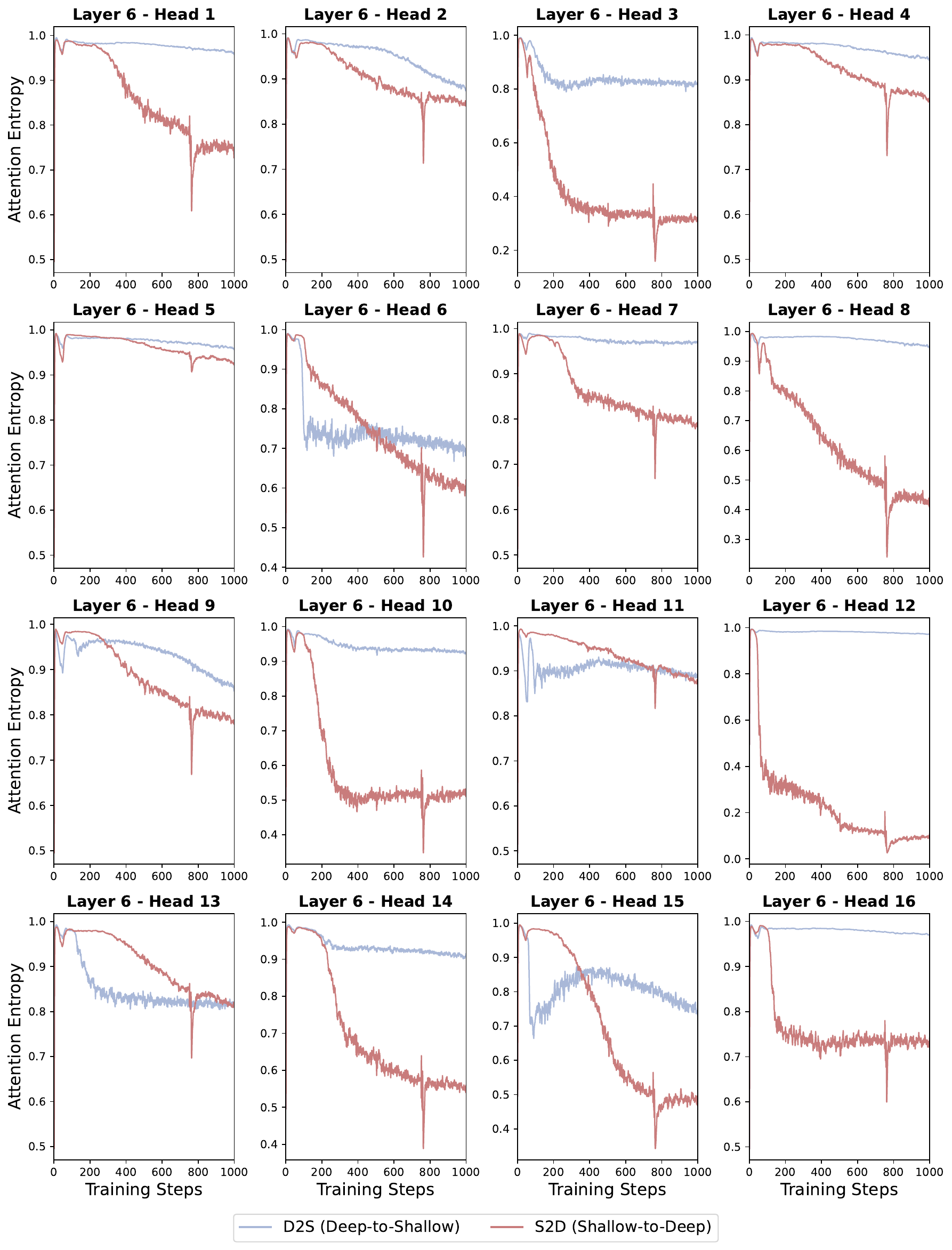} 
        \caption{Attention entropy dynamics of Layer 6 heads during training, including the warm-up phase (steps 0--250) and layer selection phase (steps 250--1000), for models trained with D2S and S2D strategies in Table~\ref{tab: growth_direction}.}
    \label{fig: direction_stable}
\end{figure*}

\end{document}